  \newtheorem{theorem*}{Theorem}
   \newcommand{\reals}{\mathbb{R}}		  
   \newcommand{\ex}{\mathop{\mathbb{E}}}
   \newcommand{\naturals}{\mathbb{N}}
   \newcommand{\N}{\mathbb{N}}
  \newcommand{\Fcal}{{\mathcal F}}
\newcommand{\new}[1]{{\color{blue} #1}}
\newcommand{\eps}{\epsilon}
\title{On a learning problem that is independent\\ of the set theory ZFC axioms} 
\author{Shai Ben-David  \and Pavel Hrube\v{s}  \and Shay Moran \and Amir Shpilka \and Amir Yehudayoff}
\institute{University of Waterloo \and Mathematical Institute of the Czech Academy of Science \and University of California, San Diego \and Tel Aviv University \and Technion-IIT}
\institute{}
\date{}
\begin{document}

\maketitle

\begin{abstract}
We consider the following statistical estimation problem:
given a family $\Fcal$ of real valued functions over some domain $X$ and an i.i.d. sample
drawn from an unknown distribution $P$ over $X$, find $h\in\Fcal$ such that
the expectation $\ex_P(h)$ is probably approximately equal to $\sup \{\ex_P(h): h \in \Fcal \}$. 
This Expectation Maximization (EMX) problem captures many well studied learning problems;
in fact, it is equivalent to Vapnik's general setting of learning.

\medskip
Surprisingly, we show that the EMX learnability, as well as the learning rates of some basic class $\Fcal$, depend on the cardinality of the continuum and is therefore independent of the set theory ZFC axioms (that are widely accepted as a formalization of the notion of a mathematical proof).

\medskip

We focus on the case where the functions in $\Fcal$ are Boolean,
which generalizes classification problems.
We study the interaction between the statistical sample complexity
of $\Fcal$ and its combinatorial structure. We introduce a new version of sample compression schemes and show that it characterizes EMX learnability for a wide family of classes. However, we show that for the class of finite subsets of the real line, the existence of such compression schemes is independent of set theory. We conclude that the learnability of that class with respect to the family of probability distributions of countable support is independent of the set theory ZFC axioms.

\medskip

We also explore the existence of 
a ``VC-dimension-like'' parameter that captures learnability in this setting.
Our results imply that that there exist no ``finitary" combinatorial parameter that characterizes EMX learnability in a way similar to the VC-dimension based characterization of binary valued classification problems. 



\medskip

\end{abstract}

\section{Introduction}

\label{sec:Intro}

A fundamental result of statistical learning theory is the characterization of PAC learnability in terms of the Vapnik-Chervonenkis dimension of a class \cite{vapnik2015uniform,blumer1989learnability}. This result provides tight upper and lower bounds on the worst-case sample complexity of learning a class 
of binary valued functions (in both the realizable and the agnostic settings of learning\footnote{The definitions of PAC learnability, sample complexity, the realizable and agnostic settings and VC-dimension, are basic notions of machine learning theory. See, for example chapters 3 and 6 of \cite{shai_shai_book}.}) in terms of a purely combinatorial parameter
(the VC-dimension). This characterization is remarkable in that it reduces a problem that concerns arbitrary probability distributions
and the existence of functions (``learners") of arbitrary complexity to the
``finitary'' notion of combinatorial shattering.


However, for some natural variants of that problem, such as multi-class classification
when the number of classes is large, 
we do not know of any parameter that provides such a characterization. 
This open problem has attracted considerable research effort 
(see e.g. \cite{daniely2011multiclass,daniely2012multiclass,daniely14optimal,daniely15multiclass,Bendavid:1995aa}).


We further explore the existence of such a parameter 
within a natural extension of multi-class classification, 
which we call the Expectation Maximization (EMX) problem, 
and is defined as follows (for a formal definition
see Section~\ref{sec:EMX}):
\begin{quote}
\emph{Given a family of functions ${\cal F}$ from some fixed domain $X$ to the real numbers and an unknown probability distribution $P$ over $X$,
find, based on a finite sample generated by $P$, a function in ${\cal F}$ 
whose expectation with respect to $P$ is (close to) maximal.}
\end{quote}

The EMX framework generalizes many well studied settings, such as 
classification, regression, as well as some clustering problems.
In fact, it  is equivalent to Vapnik's general framework 
of learning---see~\cite{Vapnik1998,vapnik1999overview} and references within.


{We focus on the setting when $\Fcal$ is a family Boolean valued functions (i.e.\ the range of each function is~$\{0,1\}$);
by identifying Boolean functions with the sets they indicate, 
the EMX problem in this case boils down to 
finding a set with maximal measure according to the unknown distribution.

This boolean setting corresponds to any learning problem within Vapnik's
setting that is modeled by zero/one loss;
in particular, any binary and multi-class classification problem.
We therefore find it natural 
to extend the exploration of a VC-dimension-like parameter to this setting.}

Our main conclusion is rather surprising. We show that for some class $\Fcal$,
the EMX learnability of that class, as well is its sample complexity rates when it is learnable, are determined
by the cardinality of the continuum
(i.e. the interval $[0,1]$). Consequently, deciding whether $\Fcal$ is EMX-learnable
is independent of set theory (the ZFC axioms). This follows by the well known independence theorems
of G\"{o}del~\cite{godel1940consistency} and Cohen~\cite{cohen1963independence,cohen1964independence}, see also \cite{Kunen,Jech}.

This result implies that there exist no combinatorial parameter 
of a finite character\footnote{We discuss this notion on a formal level 
in Section \ref{sec: dimension} later in the paper.} that characterizes EMX learnability the way the VC dimension and its variants characterize learnability in settings like binary label
prediction and real valued function learning.
Furthermore, our independence result applies already to ``weak learnability" --- the ability to find a function in the class that approximates the maximum possible expectation up to some additive constant, say $1/3$.

The independence of learnability result is shown 
for a specific problem | EMX learnability of 
the class of characteristic functions of finite subsets of the real line 
over the family of all countably supported probability distributions. While this case may not 
arise in practical ML applications, it does serve to show that the fundamental definitions of PAC learnability (in this case, their generalization to the EMX setting) is vulnerable in the sense of not being robust to changing the underlying set theoretical model.


\subsection{Technical contribution}
\subsubsection{Monotone compression schemes}
The main tool of our analysis is a novel variation of sample compression schemes
that we term \emph{monotone compression schemes}. 
Sample compression schemes were introduced 
by Littlestone and Warmuth~\cite{littlestone1986relating} 
in the context of binary classification.
Some natural learning algorithms, such as support vectors machines, can be viewed as 
implementing sample compression schemes , and
\cite{littlestone1986relating} showed that 
the existence of such schemes imply learnability.
It is also known that the reverse direction holds:
every learnable class can be learned
by a sample compression learning algorithm~\cite{moran2016sample,david2016supervised}.
We show that for classes satisfying a certain closure properties | union boundedness\footnote{{$\Fcal$ is union bounded if $ \forall h_1,h_2\in \Fcal \ \ \exists h \in\Fcal \ :~h_1\cup h_2\subseteq h$ (see Definition~\ref{def:unionbounded}).}} | existence of monotone compression is equivalent to EMX learnability.

\subsubsection{EMX-learnability and the cardinality of the continuum.}
The equivalence of EMX learnability with monotone compressions allows to translate that notion of learnability 
from the language of statistics to the language of combinatorics.
In particular, we consider the class of finitely supported Boolean functions on $X$,
and reduce the EMX learnability of this class to the following problem in
infinite combinatorics, which may be interesting in its own right.

\begin{definition}[The Finite Superset Reconstruction Game]\label{def: AB}
Let $X$ be a set. 
Consider the following collaborative two-players game:
Alice (``the compressor'') gets as input a finite set $S\subseteq X$. 
Then, she sends to Bob (``the reconstructor'') a subset
$S'\subseteq S$, according to a pre-agreed strategy.
Bob then outputs a finite set $\eta(S')\subseteq X$.
Their goal is to find a strategy for which $S\subseteq \eta(S')$ for every finite $S\subseteq X$.
\end{definition}

Alice can, of course, always send Bob $S'=S$
which he trivially reconstructs to $\eta(S)=S$.
We study the following question:
\begin{center}
{\em Can Alice send Bob subsets of bounded size?}
\end{center}

This depends on the cardinality of $X$.
For example, if $X$ is finite then Alice
can send the empty set $\emptyset\subseteq S$
which Bob reconstructs to $X$,  which is finite and clearly contains~$S$.
A more interesting example is when $X$
is countable, say $X=\N$.
In this case Alice can send the maximal element
in her input, $x_{max}=\max\{x:x\in S\}$,
which Bob reconstructs to $\{0,1,\ldots,x_{max}\}$, which contains $S$.

How about the case when $X$ is uncountable? 
We show that there is a strategy in which Alice
sends a subset of size at most $k$ if and only if $\lvert X\rvert < \aleph_k$.

Going back to EMX learnability, 
this implies that the class of finite subsets of the real unit interval is 
EMX learnable if and only if $2^{\aleph_0} < \aleph_{\omega}$. 
One should note that the statement $2^{\aleph_0} < \aleph_{\omega}$ 
is independent of the standard set theory (ZFC). 
In particular, it can neither be proven nor refuted in standard mathematics.

\subsubsection{Surprisingly fast agnostic learning rates.}

Another exotic phenomena
that we discovered concerns 
the agnostic EMX learning rates.
We
show that if $\Fcal$ is an EMX-learnable class 
that is {\emph{union bounded}}
then the dependence on the error parameter
$\eps$ in the agnostic-case sample complexity
is just $O(1/\eps)$ 
(see Theorems~\ref{thm:leaveoneout} and~\ref{thm:EMX_imply_compr} below). 
This is in contrast with the 
quadratic dependence of $\Theta(1/\eps^2)$,
which is standard in many agnostic learning problems.
Moreover, note that $\Theta(1/\eps^2)$ is also the
sample complexity of estimating $\ex[h]$ up to $\eps$ error,
and so this provides a non-trivial example where
\emph{finding} an approximate maximizer of $\sup_{h\in\Fcal}\ex[h]$
is done without obtaining a good estimate on the value of $\sup_{h\in\Fcal}\ex[h]$.

%

\subsection{Outline}
We start by introducing the EMX learning problem in Section \ref{sec:EMX}.
In Section \ref{sec: compression} we introduce monotone compression schemes and discuss some of their combinatorial properties. In Section \ref{sec: Leaningcompression} we relate the existence of monotone compression schemes to EMX learnability, and use that relationship to derive sample complexity bounds and a boosting result for that setting. 
In Section \ref{sec: continuum_compress} we show that the existence and size of monotone compression schemes for a certain class of finite sets (or their characteristic functions) is fully determined by the cardinality of the continuum. 
An immediate corollary to that analysis is the independence, with respect to ZFC set theory, of the EMX learnability of a certain class
(Corollary \ref{cor:EMX independence} there). 
Finally, Section \ref{sec: dimension} discusses the implications of our results to the existence of a combinatorial/finitary dimension characterizing EMX learnability.

\section{The expectation maximization problem (EMX)}
\label{sec:EMX}


Let $X$ be some domain set, and let ${\cal F}$ be a family of boolean 
 functions from $X$ to $\{0,1\}$.\footnote{The definition below
 can be extended to real valued functions.} 
 Given a sample $S$ of elements drawn i.i.d.\ from some unknown distribution $P$ over $X$, 
similarly to the definition of PAC learning, the EMX problem is about
finding with high probability
a function $f \in {\cal F}$ that approximately maximizes the expectation $\ex_P(f)$ with respect to $P$.
It is important to restrict the learning algorithm to being proper (i.e.\
outputting an element of ${\cal F}$),
since the all-ones functions is always a maximizer
of this expectation.

To make sense of $\ex_P(f)$, we need $f$ to be measurable
with respect to $P$.
To solve this measurability issue,
we make the following assumption:
\begin{quote}
{\em {\bf Assumption.} 
All distributions in this text are countably supported 
over the $\sigma$-algebra of all subsets of $X$.}
\end{quote}
As discussed in the introduction,
in our opinion,
this assumption does not harm the main message of this text
(but is necessary for the correctness of our proofs).

Let $Opt_P(\Fcal)$ denote $\sup_{h \in \Fcal}\ex_P(h)$.

\begin{definition} [The Expectation Maximization Problem (EMX)] \label{def:EMX}
A function 
$$G: \bigcup_{i \in \naturals} X^i \to \Fcal$$ is an $(\epsilon, \delta)$-\emph{EMX-learner}
for a class of functions  $\Fcal$ if for some integer $m = m(\epsilon,\delta)$,
\[\Pr_{S \sim P^m}\bigl[Opt_P(\Fcal) - \ex_P\bigl(G(S)\bigr) \geq \epsilon\bigr] \leq \delta\,, \] 
for every {(countably supported)} probability distribution 
$P$ over $X$.

We say that $\Fcal$ is \emph{EMX-learnable} if for every 
$\epsilon ,\delta >0$ there exists an $(\epsilon, \delta)$-EMX-learner for $\Fcal$ .
\end{definition}

The EMX problem is essentially equivalent to 
{Vapnik's general setting of learning; The definitions are syntactically different since 
Vapnik considers supervised learning problems with labeled examples.
Also, Vapnik focuses continuous domains, whereas EMX is phrased in a more abstract ``set theoretic'' settings.}

{This formulation of the EMX problem} was (implicitly) introduced in \cite{shalev_alt/2011} in the context of the task of proper learning when the labeling rule is known to the learner.
Many common statistical learning tasks can be naturally cast as particular instances of EMX,
{in the more general case that $\Fcal \subseteq {\mathbb R}^X$ and under more general measurability assumptions}. 
In particular problems that can be cast as ``generalized loss minimization" as defined in Chapter 3 of~\cite{shai_shai_book}, including:
\begin{itemize}
\item Binary classification prediction in the proper setting. 
\item Multi-class prediction in the proper setting.
 \item $K$-center clustering problems (in the statistical setting).
\item Linear regression.
\item Proper learning when the labels are known.
\item Statistical loss minimization.
\end{itemize}

\section{Monotone Compression Schemes} \label{sec: compression}
For $n\in\naturals$, let $X^{\leq n}$ denote the set of all 
sequences  (or samples) of size at most $n$ whose elements belong to $X$. 
Let $\Fcal \subseteq \{0,1\}^X$.\footnote{{The definition can be
naturally generalized to $\Fcal \subseteq Y^X$
for any ordered set $Y$.}}

\begin{definition}[Monotone Compression Schemes] 
\label{def:mon_comp} 
For $m,d \in \naturals$, 
an $``m\to d$ monotone compression scheme for $\Fcal$''
is a function $\eta: X^{\leq d} \to \Fcal$ such that

\begin{quote} 
%
%

for every every  $m' \leq m$, every $h \in \Fcal$ and every $x_1, \ldots, x_{m'} \in h$,

there exist $i_1, \ldots i_k$ for some $k \leq d$ so that

for all $i \leq m'$ $$x_i \in \eta[(x_{i_1}, \ldots x_{i_k})].$$
\end{quote}

The function $\eta$ is called the decompression or the reconstruction function.
%
%
\end{definition}

{The intuition is that after observing $x_1,\ldots,x_m$ that belong to some unknown set $h \in \Fcal$
there is a way to compress them to $x_{i_1},\ldots,x_{i_k}$
so that the reconstruction $\eta(x_{i_1},\ldots,x_{i_k})$
contains all the observed examples.
In other words,
the finite superset reconstruction game can be solved
using an $m \to d$ monotone compression scheme
where Alice gets a set of size $m$
and sends to Bob a set of size $\leq d$.}



\paragraph{Side-information.}
It will sometimes be convenient to describe the reconstruction function $\eta$
as if it also receives a finite number of bits as its input (like in the lemma below). 
This can be simulated within the above definition,
e.g.\ by repetitions of elements or by utilizing the ordering of the sequence.

\subsection*{Uniformity}

We say that $\Fcal$ has a (non-uniform) monotone compression scheme of size $d$ if for every $m \in \naturals$, 
there is an  $m\to d$ monotone compression scheme for $\Fcal$.
 We say that $\Fcal$ has a {\em uniform} monotone  compression scheme of size $d$ if there is a single function $\eta: X^{\leq d} \to \Fcal$ that is an $m\to d$ monotone compression scheme for $\Fcal$, for every $m \in \naturals$.

The following lemma shows that the difference between non-uniform and uniform
monotone compression schemes is negligible. 
Nevertheless, 
whenever we assume that $\Fcal$ has monotone compression scheme of size $d$
we mean the weaker assumption, where the scheme is non-uniform.
\begin{lemma} \label{lem:non_to_uniform}
If a class $\Fcal$ has a non-uniform monotone compression scheme of some size $d$, 
then for every monotone $f: \naturals \to \naturals$
such that $\lim_{m \to \infty}f(m)= \infty$, 
there is a uniform monotone compression scheme for $\Fcal$
that compresses samples of size $m$ 
to subsample of size $d$ plus $f^{-1}(m)$ extra bits of side information.
\end{lemma}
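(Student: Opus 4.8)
The plan is to obtain the single reconstruction function by ``rounding up'' the sample size to a value at which a size-$d$ non-uniform scheme is available, spending the whole side-information budget only on telling the reconstructor which rounded-up value was used; the index then fits into $f^{-1}(m)$ bits precisely because $f^{-1}$ grows slowly.

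\emph{Setup.} Put $f^{-1}(m):=\min\{k\ge 1: f(k)\ge m\}$. Since $f$ is monotone with $\lim_{m\to\infty}f(m)=\infty$, this is well defined for every $m\in\N$, is non-decreasing in $m$, and tends to infinity. For each $n\in\N$ fix, using the hypothesis, an $n\to d$ monotone compression scheme $\eta_n:X^{\le d}\to\Fcal$ for $\Fcal$.

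\emph{The uniform scheme.} Let $h\in\Fcal$ and let $x_1,\dots,x_m$ be arbitrary elements of $h$. Set $j:=f^{-1}(m)$ and $n:=f(j)$; then $n\ge m$ by the definition of $f^{-1}$. As $\eta_n$ is an $n\to d$ monotone compression scheme and $m\le n$, there are $i_1,\dots,i_k$ with $k\le d$ such that $x_i\in\eta_n(x_{i_1},\dots,x_{i_k})$ for all $i\le m$. The compressor keeps the subsample $(x_{i_1},\dots,x_{i_k})$ and, as side information, the string $1^{\,j-1}0$, which is self-delimiting and has length exactly $j=f^{-1}(m)$. The decompression is the single map $\eta\bigl((x_{i_1},\dots,x_{i_k}),\,1^{\,j-1}0\bigr):=\eta_{f(j)}(x_{i_1},\dots,x_{i_k})$, extended arbitrarily (say to a fixed element of $\Fcal$) on inputs not of this form; it reads its input only through the subsample and the encoded index, so it is uniform. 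Correctness is immediate: the reconstructor recovers $j$ from the side information, evaluates $\eta_{f(j)}$ on the subsample, and gets a set containing every $x_i$.

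\emph{Where the work is.} The only nontrivial point is that ``side information'' is not literally part of a subsample, so the $f^{-1}(m)$ bits of $1^{\,j-1}0$ must be realized through the repetition/ordering mechanism of the side-information paragraph, and the encoding must be parseable by a reconstructor who is not told $m$ in advance --- which is exactly why a unary code, whose length equals the value it encodes, is the natural choice. Since $m\to\infty$ while $f^{-1}(m)\to\infty$ only slowly, a subsample of the allotted size carries these bits for all large $m$, and the finitely many remaining $m$ are handled directly or absorbed into an additive constant. The substance of the argument is just the observation that $f\bigl(f^{-1}(m)\bigr)\ge m$, which lets a fixed family $\{\eta_n\}_{n\in\N}$ of non-uniform schemes be reused at the cost of a single index of at most $f^{-1}(m)$ bits.
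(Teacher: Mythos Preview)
Your proof is correct and follows essentially the same approach as the paper's: round up the sample size via $m'=f^{-1}(m)$, apply the non-uniform scheme $\eta_{f(m')}$, and pass $m'$ as side information so the reconstructor knows which $\eta_n$ to invoke. The paper's proof is a three-line sketch that simply writes $\eta(S',m')=\eta_{f(m')}(S')$; your version makes explicit the unary encoding of $m'$ (so that it fits in exactly $f^{-1}(m)$ bits and is parseable without knowing $m$) and the accommodation of small $m$, but these are elaborations of the same idea rather than a different route.
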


\begin{proof} By assumption, for every $m \in \naturals$ there is  a monotone (de-)compression function $\eta_m : X^{\leq d} \to \Fcal$.
Given any $S= \{x_1, \ldots, x_m\}$, let $m'$ be such that $f(m') \geq m$, find $S'= (x_1, \ldots x_k)$ such that $k \leq d$ and $\eta_{f(m')}(S') \supset S$ and define a decompression function $\eta$ by  $\eta( S', m')=\eta_{f(m')}(S')$.
\qed
\end{proof}



\subsection*{Monotone compression schemes versus sample compression schemes}
It is natural to ask how monotone sample compression schemes and classical (Littlestone-Warmuth)
sample compression schemes relate to each other.

In one direction, there are classes for which there is a monotone compression scheme
of size 0 but there is no sample compression scheme of a constant size.
One example of such a class is $\{0,1\}^{\N}$. 
A monotone compression scheme of size $0$
for this class is obtained by setting $\eta(\emptyset)=\N$.
However, this class does not have a sample compression scheme
of a constant size since it is not PAC learnable.

The other direction | whether sample compression scheme
of constant size implies a monotone compression scheme of a constant size? |
remains open.
A recent result of~\cite{moran2016sample} implies that
a class $\Fcal\subseteq\{0,1\}^X$ has a sample compression
scheme of a finite size if and only if it has a finite VC dimension.
Thus, an equivalent formulation is whether
every class with a finite VC dimension has a monotone compression
scheme of a finite size.

A related open problem is whether every class with finite VC dimension has
a \emph{proper}\footnote{A sample compression scheme for $\Fcal$ is called proper
if the range of the reconstruction $\eta$ is $\Fcal$.} sample compression scheme.
In fact, there is a tight relationship between proper compression schemes and monotone compression schemes;

\begin{claim}[``monotone versus proper compression schemes'']
\begin{enumerate}
\item {\bf{proper$\implies$ monotone}.}
Every class that has a proper compression scheme to some size (which may depend on the sample size too)
has a monotone compression scheme to the same size.
\item {{\bf{monotone $\implies$ proper}.} }
For every class $H$ there exists a class $H'$ such the $VCdim(H)=VCdim(H')$ and every monotone compression scheme 
for $H'$ can be turned into a proper compression scheme for $H$ to the same size.
\end{enumerate}
\end{claim}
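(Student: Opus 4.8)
The plan is to prove the two implications by translating between ``containment of the observed points in the reconstructed set'' and ``consistency of the reconstructed hypothesis with a labeled sample.''

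\smallskip
\noindent\textbf{Proper $\implies$ monotone.}
Suppose $\Fcal$ admits a proper sample compression scheme $(\kappa,\rho)$ of some size $d$ (the argument works verbatim if $d$ may depend on the sample size, and side information is carried along as in the \emph{Side-information} paragraph). Given $h\in\Fcal$ and points $x_1,\dots,x_{m'}\in h$ --- equivalently, $h(x_i)=1$ for all $i$ --- I would feed the proper scheme the labeled sample $\bigl((x_1,1),\dots,(x_{m'},1)\bigr)$, which is realizable by $h$. Then $\kappa$ selects indices $i_1,\dots,i_k$ with $k\le d$, and $\rho$ reconstructs some $h'\in\Fcal$ consistent with the whole sample, i.e.\ $h'(x_i)=1$, i.e.\ $x_i\in h'$, for all $i\le m'$. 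Since every compressed label equals $1$, this reconstruction depends only on $(x_{i_1},\dots,x_{i_k})$ and the side information, so setting $\eta(x_{i_1},\dots,x_{i_k}):=h'$ gives an $m\to d$ monotone compression scheme for $\Fcal$, for every $m$.

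\smallskip
\noindent\textbf{Monotone $\implies$ proper.}
Here the key step is the choice of $H'$. Given $H\subseteq\{0,1\}^X$, put $X'=X\times\{0,1\}$, let $\widetilde h=\{(x,h(x)):x\in X\}\subseteq X'$ be the graph of $h$, and set $H'=\{\widetilde h:h\in H\}$ viewed inside $\{0,1\}^{X'}$; the map $h\mapsto\widetilde h$ is a bijection. A finite sequence of points of $X'$ lying in a common $\widetilde h\in H'$ is exactly a realizable labeled $H$-sample $\bigl((x_1,y_1),\dots,(x_m,y_m)\bigr)$ (with $h(x_i)=y_i$). An $m\to d$ monotone scheme $\eta$ for $H'$ selects $\le d$ of these labeled examples (plus side information) and reconstructs some $\widetilde g\in H'$ with $(x_i,y_i)\in\widetilde g$ for all $i$; but this says precisely $g(x_i)=y_i$ for all $i$, so $g\in H$ is consistent with the sample, and reading $g$ off from $\widetilde g$ yields a proper compression scheme for $H$ of size $d$. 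It remains to check $\VC(H')=\VC(H)$: a set $A\subseteq X'$ shattered by $H'$ cannot contain both $(x,0)$ and $(x,1)$, since $\widetilde h$ meets $\{(x,0),(x,1)\}$ in exactly one point for every $h$; writing such an $A$ as $\{(x_1,y_1),\dots,(x_n,y_n)\}$ with the $x_i$ distinct, $\widetilde h\cap A$ realizes the subset $\{i:h(x_i)=y_i\}$, so $A$ is shattered by $H'$ iff every $\{0,1\}$-pattern on $\{x_1,\dots,x_n\}$ is realized by $H$, iff $\{x_1,\dots,x_n\}$ is shattered by $H$; conversely any $H$-shattered set lifts to an $H'$-shattered one.

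\smallskip
\noindent I expect the only place needing genuine care --- the ``main obstacle'' --- is checking that the single construction $H'$ does two jobs simultaneously: it converts monotone reconstruction into proper reconstruction by turning set-membership into label-consistency, \emph{and} it preserves the VC dimension. The crucial observation for the latter is that no subset of $X\times\{0,1\}$ using both copies $(x,0),(x,1)$ of a domain point can be shattered by a graph class, which is what pins $\VC(H')$ down to $\VC(H)$ exactly rather than merely bounding it.
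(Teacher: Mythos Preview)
Your proof is correct and follows exactly the paper's approach: Part~1 is the same ``all-$1$ labels'' reduction (the paper just says it is trivial from the definitions), and for Part~2 your class $H'=\{\widetilde h:h\in H\}$ with $\widetilde h=\{(x,h(x)):x\in X\}$ is precisely the paper's $S_\Fcal=\{S_h:h\in H\}$. The paper merely asserts that $S_\Fcal$ has the same VC dimension as $\Fcal$ and that monotone compression for $S_\Fcal$ yields proper compression for $\Fcal$, whereas you have spelled out both verifications --- in particular the observation that no shattered subset of $X\times\{0,1\}$ can contain both $(x,0)$ and $(x,1)$, which is exactly the point that forces $\VC(H')=\VC(H)$.
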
 
\begin{proof}
Part 1 follows trivially from the definitions of those two notions of compression.
For the second part, given $\Fcal\subseteq \{0,1\}^X$,
identify each concept $h\in \Fcal$ with the set $S_h=\bigl\{\bigl(x,h(x)\bigr) : x\in X\bigr\}$. 
Let $S_{\Fcal} = \{S_h : h\in H\}$.
One can verify that $S_{\Fcal}$ has the same VC dimension like $\Fcal$,
and that a monotone compression scheme for $S_{\Fcal}$ corresponds
to a proper sample compression scheme for $\Fcal$.
\end{proof}

Thus, for VC classes, the existence of monotone compression schemes
and proper compression schemes are equivalent:
\begin{corollary}
For every $d ,k\in \naturals$, the following statements are equivalent.
\begin{itemize}
\item Every class with VC dimension $d$ has a monotone compression scheme of size $k$.
\item Every class with VC dimension $d$ has a proper compression scheme of size $k$.
\end{itemize}
\end{corollary}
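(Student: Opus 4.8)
The plan is to obtain the corollary as an immediate consequence of the two parts of the preceding Claim (``monotone versus proper compression schemes''); no new construction is needed, only a careful reading of the quantifiers and of the fact that both reductions preserve the compression size exactly.

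For the direction ``proper for all VC-$d$ classes $\implies$ monotone for all VC-$d$ classes'', I would fix an arbitrary class $\Fcal$ with $\VC(\Fcal)=d$. By hypothesis $\Fcal$ has a proper compression scheme of size $k$, and Part~1 of the Claim (proper $\implies$ monotone) immediately converts it into a monotone compression scheme for $\Fcal$ of the same size $k$. Since $\Fcal$ was arbitrary, this establishes the implication. For the converse, ``monotone for all VC-$d$ classes $\implies$ proper for all VC-$d$ classes'', I would fix an arbitrary class $H$ with $\VC(H)=d$ and invoke Part~2 of the Claim to produce a companion class $H'$ with $\VC(H')=\VC(H)=d$ such that every monotone compression scheme for $H'$ turns into a proper compression scheme for $H$ of the same size. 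Because $\VC(H')=d$, the hypothesis supplies a monotone compression scheme for $H'$ of size $k$; feeding it through the reduction yields a proper compression scheme for $H$ of size $k$. As $H$ was arbitrary, this finishes the argument, and the two implications together give the equivalence.

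The only thing to watch — and it is bookkeeping rather than a genuine obstacle — is that the size parameter must transfer verbatim through both reductions, including the constant-size regime (size exactly $k$, independent of the sample length) that the corollary is stated for. Part~1 preserves size by definition, and the identification $h \mapsto S_h = \{(x,h(x)) : x \in X\}$ used in Part~2 is a bijection between $H$ and $S_{\Fcal}$ that maps a monotone compression of a labeled subsample back to a proper compression of the original subsample without changing its cardinality, so ``of size $k$'' is preserved in both directions. Hence there is no hard step: the corollary is simply the conjunction of the two halves of the Claim applied pointwise over all VC-$d$ classes.
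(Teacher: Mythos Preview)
Your proposal is correct and matches the paper's approach: the corollary is stated immediately after the Claim with no separate proof, precisely because both directions follow at once from Parts~1 and~2 as you describe. Your additional remark about size preservation is accurate bookkeeping and in the same spirit.
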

Note that if $H$ is a class of infinite VC dimension, since it has no non-trivial sample compression scheme, it has no proper one. 
Applying the construction of the Claim above, one gets that the corresponding class, $S_{\Fcal} = \{S_h : h\in H\}$, has no monotone compression scheme. We therefore get,
\begin{corollary}
There exists a class $\Fcal$ such that for any $k\in\mathbb{N}$,
the class $\Fcal$ does not have a monotone compression scheme of size~$k$.
\end{corollary}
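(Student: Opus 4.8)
The plan is to obtain such an $\Fcal$ from \emph{any} concept class of infinite VC dimension via the graph encoding used in the proof of the ``monotone versus proper compression schemes'' claim above. Concretely, fix $H=\{0,1\}^{\N}$ over the domain $\N$; since $H$ contains the indicator of every finite $B\subseteq\N$, it shatters every finite subset of $\N$ and hence $\VC(H)=\infty$. First I would record that $H$ has no proper sample compression scheme of any finite size: a proper scheme is, in particular, an ordinary (Littlestone--Warmuth) sample compression scheme, so by the characterization of~\cite{moran2016sample} its existence would force $\VC(H)<\infty$. (The same conclusion can be seen directly: if $H$ shatters a set $A$ with $|A|=m$ and admits an $m\to d$ compression scheme, then feeding the $m$ points of $A$ labeled by each of the $2^m$ possible dichotomies, the at most $\binom{m}{\le d}$ distinct reconstructions cannot agree with all $2^m$ dichotomies on $A$ once $m$ is large relative to $d$.)

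Next I would invoke part~2 of that claim, applied to this $H$: it yields a class $\Fcal$ --- the graph class $\{S_h: h\in H\}$ with $S_h=\{(x,h(x)):x\in\N\}$, living over $\N\times\{0,1\}$ --- with $\VC(\Fcal)=\VC(H)=\infty$, and with the property that every monotone compression scheme for $\Fcal$ of size $k$ can be converted into a proper sample compression scheme for $H$ of size $k$. Now suppose, towards a contradiction, that for some fixed $k\in\N$ the class $\Fcal$ had a monotone compression scheme of size $k$. By the (non-uniform) definition this gives, for every $m$, an $m\to k$ monotone compression scheme for $\Fcal$, and hence, by the conversion just quoted, an $m\to k$ proper sample compression scheme for $H$ --- i.e.\ $H$ has a proper sample compression scheme of size $k$. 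This contradicts the first paragraph. Since $k$ was arbitrary, $\Fcal$ has no monotone compression scheme of size $k$ for any $k$, which is exactly the assertion of the corollary.

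The only genuinely delicate point, and the one I expect to be the main obstacle, is the bookkeeping around uniformity and side information: ``monotone compression scheme of size $d$'' is the non-uniform notion (a separate reconstruction map per sample size $m$), and it may implicitly carry a bounded number of side-information bits, so one must check that what is transferred to $H$ is still meager enough for the VC characterization to apply. This is not a real difficulty: one may either apply Lemma~\ref{lem:non_to_uniform} to replace the non-uniform scheme by a uniform one of size $d$ plus arbitrarily slowly growing side information, or simply run the counting argument above at a single sample size $m$ chosen much larger than $d$ and than the number of side-information bits used at that $m$; in either case shattering a set of size $m$ is incompatible with compressing it to $\le d$ points, and the contradiction goes through. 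Everything else is immediate from the claim above and from~\cite{moran2016sample}.
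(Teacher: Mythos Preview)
Your argument is correct and follows essentially the same route as the paper: start from any class $H$ of infinite VC dimension (you pick $H=\{0,1\}^{\N}$, the paper leaves $H$ generic), note that $H$ therefore has no proper sample compression scheme, and then invoke part~2 of the ``monotone versus proper'' claim to conclude that the graph class $S_H$ has no monotone compression scheme of any finite size. Your extra paragraph about uniformity and side information is more caution than the paper exercises, but it is not wrong; the direct counting argument you sketch at a single large $m$ already suffices and sidesteps any subtlety there.
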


\section{EMX learnability and monotone compression schemes} \label{sec: Leaningcompression}

In this section we relate monotone compression schemes with EMX learnability.
Theorem~\ref{thm:compr_imply_EMX} and Theorem~\ref{thm:leaveoneout} show that
the existence of monotone compression schemes imply EMX learnability,
and Theorem~\ref{thm:EMX_imply_compr} shows the reverse implication holds whenever $\Fcal$
is union bounded (see Definition~\ref{def:unionbounded}).


\begin{theorem}
\label{thm:compr_imply_EMX}
Let $\Fcal \subseteq [0,1]^X$. Assume there is a monotone compression scheme $\eta$
for $\Fcal$ of size $k\in\mathbb{N}$. Then $\Fcal$ is EMX-learnable
with sample complexity 
\[m(\eps,\delta) = O\Bigl(\frac{k\log(k/\eps) + \log(1/\delta)}{\eps^2}\Bigr).\]
\end{theorem}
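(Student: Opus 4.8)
The plan is to run the classic compression-implies-learning argument of Littlestone--Warmuth, adapted to the monotone setting and to the EMX objective. First I would fix $\eps,\delta>0$ and a target distribution $P$, and set the sample size to $m=m(\eps,\delta)$ with the value claimed. I draw $S\sim P^m$. Let $h^\star\in\Fcal$ be a near-optimizer, i.e.\ $\ex_P(h^\star)\ge Opt_P(\Fcal)-\eps/4$ (if the supremum is not attained, take $h^\star$ within $\eps/4$; this is where we spend a small slack). Let $S_{h^\star}\subseteq S$ be the subsample of points that lie in $h^\star$ (recall we identify Boolean functions with the subsets of $X$ they indicate). Because $\eta$ is an $m\to k$ monotone compression scheme and all the points of $S_{h^\star}$ belong to $h^\star\in\Fcal$, there is an index set $i_1,\dots,i_j$ with $j\le k$ (chosen from among the indices in $S$, possibly together with the $O(\log m)$-bit side information permitted by the side-information remark) such that the reconstructed set $g=\eta(x_{i_1},\dots,x_{i_j})\in\Fcal$ contains all of $S_{h^\star}$. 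The learner $G$ outputs such a $g$; to make this well defined one enumerates all $\le \binom{m}{\le k}\cdot 2^{O(\log m)}=m^{O(k)}$ candidate reconstructions arising from subsamples of $S$, and among those that contain the ``correct'' subsample one can just take any, but for the analysis it is cleaner to have $G$ return the one of maximal empirical mass $\eerr_S(g):=\frac1m\sum_i g(x_i)$.

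Next I would control two quantities by a union bound over the at most $m^{O(k)}$ reconstructible sets. First, every reconstructible $g$ lies in $\Fcal$, so $\ex_P(g)\le Opt_P(\Fcal)$; by a one-sided Chernoff/Hoeffding bound and a union bound over all $m^{O(k)}$ candidates, with probability $\ge 1-\delta/2$ every reconstructible $g$ satisfies $\eerr_S(g)\le \ex_P(g)+\eps/4\le Opt_P(\Fcal)+\eps/4$ (this direction ensures the learner cannot be fooled into outputting a set whose empirical mass is high but true mass low). Second — and this is the key step — I need a \emph{lower} bound on $\eerr_S(g^\star)$ for the particular $g^\star$ reconstructed from the compressed subsample of $S_{h^\star}$. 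Here one uses the standard compression-scheme trick: condition on the unordered multiset $\{x_1,\dots,x_m\}$ and on which $k$ of them form the compressed sequence; the remaining $m-k$ points are (exchangeably) fresh i.i.d.\ draws conditioned to avoid nothing, and $g^\star$ is a \emph{fixed} set relative to them, so $g^\star$ contains each of them with probability $\ex_P(g^\star)\ge\ex_P(h^\star)$ (because $g^\star\supseteq S_{h^\star}$ forces $\ex_P(g^\star)\ge\ex_P(h^\star)$ — no, this last inequality is not automatic, so instead:) more carefully, $g^\star$ contains all points of $S$ that lie in $h^\star$, hence $\eerr_S(g^\star)\ge \eerr_S(h^\star)$, and $\eerr_S(h^\star)\ge \ex_P(h^\star)-\eps/4$ with probability $\ge 1-\delta/4$ by Hoeffding applied to the single fixed set $h^\star$. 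Combining, with probability $\ge 1-\delta$ the learner's output $g=G(S)$ (maximal empirical mass among reconstructibles, hence $\eerr_S(g)\ge\eerr_S(g^\star)\ge \ex_P(h^\star)-\eps/4\ge Opt_P(\Fcal)-\eps/2$) also satisfies $\ex_P(g)\ge \eerr_S(g)-\eps/4\ge Opt_P(\Fcal)-\eps$, which is the required guarantee.

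Finally I would bound the sample size needed for the union bound. The number of candidate reconstructible sets is at most $\binom{m}{k}$ times the side-information factor; using the side-information remark we may take this to be $m^{O(k)}$, or more precisely $\le (2m)^{k}$ up to a polynomial factor. To get each of the $m^{O(k)}$ events to fail with probability $\le \delta/(2m^{O(k)})$ via Hoeffding, I need $m\gtrsim \frac{1}{\eps^2}\bigl(k\log m+\log(1/\delta)\bigr)$; solving this fixed point (log-trick: $\log m = O(\log(k/\eps))$ suffices at the stated $m$) yields $m(\eps,\delta)=O\!\left(\frac{k\log(k/\eps)+\log(1/\delta)}{\eps^2}\right)$, matching the claim. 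The main obstacle is the lower-bound step on $\eerr_S(g^\star)$: one has to be careful that the compressed indices are chosen from within $S$ (so $g^\star$ is not independent of $S$), and argue — via exchangeability / the Littlestone--Warmuth ``the $m-k$ uncompressed points behave like test points'' argument, or, more simply, via the monotonicity observation $g^\star\supseteq S_{h^\star}$ which gives $\eerr_S(g^\star)\ge\eerr_S(h^\star)$ deterministically and then only needs concentration for the \emph{fixed} set $h^\star$ — that the empirical mass of the output is genuinely close to $Opt_P(\Fcal)$. I expect the monotonicity shortcut to be the cleanest route and to avoid the more delicate exchangeability bookkeeping entirely.
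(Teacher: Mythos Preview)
Your approach is essentially the paper's: enumerate all $\le (2m)^k$ reconstructions $\eta[S_A]$, output the one with highest empirical mean, use Hoeffding plus a union bound over the candidates for generalization, and use the monotonicity $g^\star\supseteq S_{h^\star}$ together with concentration for the single fixed set $h^\star$ for the lower bound.

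There is one technical slip worth flagging. You evaluate the empirical mean of each candidate on the \emph{full} sample $S$ and then invoke Hoeffding. But each candidate $g=\eta[S_A]$ is a function of $S_A\subseteq S$, so it is not independent of $S$; a direct Hoeffding bound on $\eerr_S(g)$ is not valid. You correctly sense the dependence problem in your last paragraph, but you locate it in the wrong step: for the \emph{lower} bound your monotonicity shortcut $\eerr_S(g^\star)\ge\eerr_S(h^\star)$ is deterministic and needs no independence at all, whereas it is the \emph{upper} bound / union-bound step ($\ex_P(g)\ge\eerr_S(g)-\eps/4$ for all candidates) that actually requires the ``the $m-k$ uncompressed points behave like test points'' argument. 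The paper's fix (and the standard one) is to evaluate the empirical mean of $\eta[S_A]$ on the \emph{complement} $S_{\bar A}$: conditioned on $S_A$, the set $\eta[S_A]$ is fixed and $S_{\bar A}$ is an i.i.d.\ sample of size $m-k$, so Hoeffding applies cleanly and the union bound goes through. The discrepancy between $\eerr_S$ and $\eerr_{S_{\bar A}}$ is at most $k/m=o(\eps)$, so your sample-complexity calculation survives unchanged once you make this adjustment.
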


The proof of Theorem~\ref{thm:compr_imply_EMX} 
follows from the standard ``compression $\implies$ generalization'' argument of Littlestone and Warmuth~\cite{littlestone1986relating}. 
For completeness we include it in the appendix.

If $\Fcal$ satisfies the following closure property,  then the dependence on $\eps$
in Theorem~\ref{thm:compr_imply_EMX} can be improved from $1/\eps^2$ to $1/\eps$.

\begin{definition}[Union Bounded]\label{def:unionbounded}
  We say that a family of sets $\Fcal$ is \emph{union bounded} if for every $h_1, h_2 \in \Fcal$ there exists some $h_3 \in \Fcal$ such that $h_1 \cup h_2 \subseteq h_3$.
\end{definition} 
Note that every class that is closed under finite unions is also union bounded. However, the latter condition is 
satisfied by many natural classes that are not closed under unions, such as the class of all axis aligned rectangles in $\reals^d$ or the class of all convex polygons. 


\begin{theorem}\label{thm:leaveoneout}
Assume $\Fcal$ is union bounded and that it has a monotone compression of size $d$.
Then, there is a learning function $A$ such that for every distribution~$P$ 
and~$m\in \naturals$:
\[\ex_{S \sim P^m}\Bigl[\sup_{h\in H}\bigl\{\ex_P[h]\bigr\} - \ex_P[h_{S}]\Bigr]
\leq
\frac{d}{m+1},\]
where $h_{S}=A(S) \in \Fcal$ is the output of $A$ on input $S$.
\end{theorem}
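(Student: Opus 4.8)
The plan is to use the classical leave-one-out / stability argument for compression schemes, adapted to the monotone setting and exploiting union boundedness to get the linear-in-$1/\eps$ rate. First I would fix $m$ and describe the learner $A$: on input $S = (x_1,\dots,x_m)$, it uses the $m \to d$ monotone compression scheme $\eta$ for $\Fcal$. But $\eta$ only guarantees a superset of a sample that lies entirely inside some $h \in \Fcal$, so I cannot apply it to $S$ directly. Instead, for a sample $S$ drawn from $P$, I would first replace $S$ by a ``good'' set: take the near-optimal $h^\ast \in \Fcal$ (or work with an arbitrary $h$ and optimize at the end), intersect $S$ with $h^\ast$, compress the part of $S$ inside $h^\ast$ to at most $d$ indices using $\eta$, and let $h_S := \eta$ of that compressed subsample. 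The union-boundedness is what lets me do this cleanly when combining candidates; alternatively I would phrase $A$ as: enumerate over all $\binom{m}{\le d}$ subsequences, reconstruct each via $\eta$, and (using union boundedness) output a single $h_S \in \Fcal$ containing the union of all those reconstructions that are ``consistent'' — I'll pin down the exact rule so that the leave-one-out bound goes through.

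The core step is the leave-one-out symmetrization. Draw $S = (x_1,\dots,x_{m+1}) \sim P^{m+1}$. For each $j$, let $S^{(j)}$ be $S$ with $x_j$ removed, and let $h_j := A(S^{(j)}) \in \Fcal$. The key combinatorial fact to establish is: for any fixed $h \in \Fcal$, the number of indices $j$ such that $x_j \in h \setminus h_j$ is at most $d$. This is exactly the monotone-compression property: the points of $S$ lying in $h$ form a sample inside $h$; $\eta$ compresses them to $\le d$ of them, say with index set $I$; for every $j \notin I$ with $x_j \in h$, removing $x_j$ from $S$ does not disturb the compressed subsample, so $x_j \in \eta(\text{compressed}) \subseteq h_j$. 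Hence the ``bad'' $j$'s (those with $x_j \in h$ but $x_j \notin h_j$) are contained in $I$, giving at most $d$ of them. Here is where union boundedness enters: I need $h_j$, computed from $S^{(j)}$, to contain $\eta(\text{compressed subsample of }S^{(j)}\text{ inside }h)$ for this particular $h$ simultaneously with whatever else $A$ does — achieved by defining $A$ to output a set in $\Fcal$ that dominates the relevant union.

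Then I would convert this counting bound into the expectation bound by the standard exchangeability trick. By symmetry,
\[
\ex_{S \sim P^{m+1}}\Bigl[\ex_P[h] - \ex_P[h_{m+1}]\Bigr]
= \frac{1}{m+1}\sum_{j=1}^{m+1}\ex_S\bigl[\Pr_{x}[x \in h] - \Pr_{x}[x \in h_j]\bigr],
\]
and for the right-hand side I bound $\ex_P[h] - \ex_P[h_j] \le \Pr_{x \sim P}[x \in h \setminus h_j]$; summing over $j$ and using that $x_{m+1}$ is a fresh sample independent of $S^{(m+1)}$, the sum of these probabilities equals the expected number of $j \in \{1,\dots,m+1\}$ with $x_j \in h \setminus h_j$ in an $(m+1)$-sample, which is $\le d$ by the claim above. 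Dividing by $m+1$ yields $\ex_P[h] - \ex_{S}[\ex_P[h_S]] \le d/(m+1)$. Taking the supremum over $h \in \Fcal$ on the left (since the bound holds for every fixed $h$, and the learner $A$ does not depend on $h$) gives the stated inequality.

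**Main obstacle.** The delicate point is arranging the learner $A$ so that a single output $h_S \in \Fcal$ works uniformly: the compression scheme produces, for each candidate target $h$, a possibly different reconstruction, and I must output one element of $\Fcal$ that simultaneously ``covers'' the one relevant to the symmetrization argument while still being a legitimate member of $\Fcal$. Union boundedness is precisely the hypothesis that makes this possible — it lets me replace a finite union of reconstructions by a dominating element of $\Fcal$ — and checking that the leave-one-out count is not inflated by this pooling step (i.e.\ that removing $x_j$ still leaves $x_j$ inside $h_j$ whenever $x_j$ was not one of the $\le d$ compressed points for target $h$) is the heart of the proof.
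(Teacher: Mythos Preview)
Your proposal is correct and follows essentially the same route as the paper: define $A(S)\in\Fcal$ to dominate $\bigcup_{S'\subseteq S,\ |S'|\le d}\eta[S']$ via union boundedness, draw an $(m{+}1)$-sample, and use the leave-one-out symmetrization together with the combinatorial fact that for any fixed $h\in\Fcal$ at most $d$ indices $j$ can satisfy $x_j\in h\setminus A(T^{\bar j})$ (since the $\le d$ compressed indices for $\{i:x_i\in h\}$ survive the deletion of any other $x_j$). Your early detour through ``intersect with $h^\ast$'' is unnecessary---once $A$ pools all $\eta[S']$ over small $S'$, the argument goes through for every $h$ simultaneously, exactly as you note in your ``main obstacle'' paragraph and as the paper does.
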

Note that via Markov inequality this yields an $(\eps,\delta)$-sample complexity of $O(\frac{1}{\eps\delta})$.
Thus, the dependence on $\delta$ is worse then the logarithmic dependence of the standard generalization bound, but the dependence on $\eps$ is better.

\begin{proof}
Let $\eta$ be the decompression\footnote{If the compression scheme is non-uniform then
set $\eta$ as the decompression function of the $(m+1)\to d$ monotone compression scheme.} 
function of $\Fcal$.
We first describe the algorithm $A$. For every sample $S$, let $A(S)$ be a member of $\Fcal$ such that 
\[A(S) \supset \bigcup_{S'\subseteq S, \lvert S'\rvert \leq d}\eta[S'].\]
Since $\Fcal$ is union bounded such $A(S)$ exists
for every $S$.

Let $T=(x_1,\ldots, x_{m+1})$ be drawn from $P^{m+1}$.
What we need to show can be restated as:
\begin{equation}\label{eq:1}
\forall h\in \Fcal:~~~
\Pr_{T \sim P^{m+1}}\bigl[x_{m+1}\in h\bigr]
- 
\Pr_{T \sim P^{m+1}}\bigl[x_{m+1}\in A(T^{\overline{m+1}})\bigr]
\leq 
\frac{d}{m+1} .
\end{equation}
where $T^{\overline{m+1}} = (x_1,\ldots,x_m)$
and more generally $T^{\overline{i}} = (x_1.\ldots,x_{i-1},x_{i+1},\ldots,x_{m+1})$.

We prove Equation~\eqref{eq:1} by a leave--one-out symmetrization argument,
which is derived by the following sampling process.
First sample $T = (x_1,\ldots,x_{m+1})\sim P^{m+1}$, and then 
independently pick $i\in [m+1]$ uniformly at random.
Clearly, it holds that
\begin{align*}
\Pr_{ T \sim P^{m+1}}\Bigl[x_{m+1}\in A(T^{\overline{m+1}})\Bigr]
&=
\Pr_{T\sim P^{m+1}, i\sim[m+1] }\Bigl[x_{i}\in A(T^{\overline{i}})\Bigr]\\
&\text{and}\\
\forall h\in \Fcal:~~~\Pr_{T\sim P^{m+1}}\bigl[x_{m+1}\in h\bigr]
&=
\Pr_{T\sim P^{m+1}, i\sim[m+1] }\bigl[x_{i}\in h\bigr].
\end{align*}
Thus, it suffices to prove the following stronger statement:
\[
\bigl(\forall h\in \Fcal\bigr)\bigl(\forall T\in X^{m+1}\bigr):
\Pr_{i \sim [m+1]}\bigl[x_{i}\in h\bigr]
- 
\Pr_{i\sim [m+1]}\bigl[x_{i}\in A( T^{\overline{i}})\bigr]
\leq 
\frac{d}{m+1}.
\]
Let $h\in \Fcal$.
Assume without loss of generality that $x_1,\ldots, x_d$ are the $d$ 
indices that satisfy $h(x_i)\leq \eta[x_1,\ldots,x_d](x_i)$
for every $i\in [m+1]$.
Thus, for every $i > d$:
\begin{align*}
h(x_i) \leq \eta[x_1,\ldots,x_d](x_i)
      \leq  A(T^{\overline{i}})(x_i). 
\end{align*}

Therefore,
\begin{align*}
\Pr_{i\sim[m+1]}\bigl[x_i\in h\bigr]  - \Pr_{i\sim[m+1]}\bigl[x_i\in A(T^{\overline{i}})\bigr] 
&\leq 
\Pr_{i\sim[m+1]}\Bigl[h(x_i)=1 \land \bigl(A(T^{\bar i})\bigr)(x_i)=0\Bigr]\\
&\leq 
\Pr_{i\sim [m+1]}[i \leq d] = 
\frac{d}{m+1}
\end{align*}
\qed
\end{proof}

\begin{theorem}[learnability implies non-uniform compressibility]\label{thm:EMX_imply_compr}
Assume $\Fcal$ is union bounded and $(\epsilon=1/3, \delta=1/3)$-EMX learnable 
by a learning function $G$ with sample size $d_0=m(1/3, 1/3)$.
Then for every $m\in\N$ there is an $m \to  3d_0/2$ monotone compression scheme for $\Fcal$.
Moreover, it is enough to assume that $G$ is a learner with respect to
 distributions with a finite support. 
\end{theorem}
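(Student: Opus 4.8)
The plan is to convert the EMX learner $G$ into a monotone compression scheme by a standard boosting/amplification plus union argument, exploiting union boundedness to merge candidate sets. First I would observe that since we only need to handle finitely many sample points $x_1,\dots,x_m$ that all lie in some target set $h\in\Fcal$, we may restrict attention to the uniform distribution $P_S$ on the multiset $S=\{x_1,\dots,x_m\}$ (this is where the ``finite support suffices'' clause is used). Under $P_S$, the set $h$ itself has expectation $1$, so $Opt_{P_S}(\Fcal)=1$, and a $(1/3,1/3)$-EMX learner $G$, fed a sample of $d_0$ i.i.d.\ draws from $P_S$, returns with probability $\ge 2/3$ some $g\in\Fcal$ with $\ex_{P_S}[g]\ge 2/3$, i.e.\ $g$ covers at least a $2/3$-fraction of $S$.

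The compression function $\eta$ will be built as follows. Given the $d_0$ sample points that $G$ reads (this is the subsample Alice sends — of size $d_0$), Bob runs $G$ on them to get one set $g_1$ covering $\ge 2/3$ of $S$. The idea is then to iterate a coupon-collector / multiplicative-weights style argument: the uncovered portion of $S$ has size $\le |S|/3$, and by running $G$ again on a fresh $d_0$-sized subsample drawn (conceptually) from the uniform distribution on the uncovered points, we get $g_2$ covering $\ge 2/3$ of what remained, and so on. After $O(\log m)$ rounds everything is covered; union boundedness then lets us pick a single $h_3\in\Fcal$ with $g_1\cup g_2\cup\dots\subseteq h_3$, and set $\eta$ of the concatenated subsamples to be this $h_3$. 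The catch is that naively this costs $O(d_0\log m)$ bits, not $3d_0/2$; to get the claimed $3d_0/2$ bound I would instead use the sharper argument that the probability (over the choice of which $d_0$ points to send) that $G$'s output covers a given point is $\ge 2/3$, and a careful counting / probabilistic-method argument shows that there is a choice of roughly $3d_0/2$ points from $S$ such that the resulting reconstructed set (a bounded union of $G$-outputs on subsets of those points) covers all of $S$ — in the spirit of the double-sampling/expectation argument, where one shows that in expectation a $2/3$-fraction is covered so a union of few samples suffices, and optimizing the constants yields the factor $3/2$.

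The main obstacle I expect is precisely pinning down the constant $3d_0/2$ rather than an $O(d_0\log m)$-type bound: one must argue that only a constant number (essentially $3/2$ worth) of $d_0$-blocks are needed, which requires a cleverer selection than straightforward boosting — probably a probabilistic argument showing the existence of a small family of size-$d_0$ subsamples whose $G$-images jointly cover $S$, using that each point is covered with probability $\ge 2/3$ under a suitable distribution over subsamples, and then de-randomizing. The rest — restricting to finite support, encoding side information via repetitions as discussed after Definition~\ref{def:mon_comp}, and closing up under unions — is routine given the tools already developed in Sections~\ref{sec:EMX} and~\ref{sec: compression}.
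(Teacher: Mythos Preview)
Your proposal has a genuine gap. The boosting/iterate-on-uncovered approach you sketch naturally gives compression size $O(d_0\log m)$, not $3d_0/2$, and the vague appeal to a ``probabilistic method / de-randomization'' does not close this gap: if each point of $S$ is covered by $G$'s output on a random $d_0$-subsample with probability $\ge 2/3$, a union of $k$ independent such outputs still leaves each point uncovered with probability up to $(1/3)^k$, so you need $k=\Theta(\log m)$ to union-bound over $m$ points. There is no constant-factor trick that turns a $2/3$ per-point coverage guarantee into full coverage by $O(1)$ blocks; the ``careful counting'' you allude to is never made concrete, and I do not see how to make it work.

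The paper's argument is structurally different: it is a \emph{peeling} argument, not a boosting one. First arrange (via union boundedness) that $G(T)\supseteq T$, and define $E(T)\in\Fcal$ to be a superset of $\bigcup_{T'\subseteq T,\,|T'|\le d_0} G(T')$. Starting from $S$, repeatedly delete any point $x$ satisfying $x\in E(S\setminus\{x\})$. If this process gets stuck at some $S'$ with $|S'|>3d_0/2$, then every $x\in S'$ is missed by $G(T')$ for every $d_0$-subsample $T'\subseteq S'\setminus\{x\}$; so under the uniform distribution on $S'$, every output of $G$ has expectation at most $d_0/|S'|<2/3$, contradicting the $(1/3,1/3)$ guarantee. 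Hence $|S'|\le 3d_0/2$. The reconstruction is $\eta(S')=E^{(m)}(S')$, i.e.\ iterate $E$ exactly $m$ times; by construction each application recovers at least one deleted point, so $E^{(m)}(S')\supseteq S$. Note that this reconstruction depends on $m$, which is precisely why the theorem only claims a \emph{non-uniform} scheme --- a structural feature your outline (where $\eta$ would be $m$-independent) does not account for.
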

This theorem yields a non-uniform monotone compression in the sense that
for different sample sizes we use different reconstruction functions. 
Note however that the generalization bounds for compression schemes
remain valid for non-uniform schemes with the same proof.

We can now apply Lemma \ref{lem:non_to_uniform} to deduce the existence of a uniform monotone sample compression.

\begin{corollary}[learnability implies uniform compressibility]\label{cor:EMX_imply_compr}
Let $\Fcal$ and $d_0$ be like in Theorem~\ref{thm:EMX_imply_compr}.
Then for every non--decreasing $f:\N\to\N$ such that $\lim_{n\to\infty}f(n)=\infty$
there is a monotone compression scheme for $\Fcal$ of size at most $3 d_0 /2$
plus $f^{-1}(m)$ bits,
where $m$ is the size of the input sample.
\end{corollary}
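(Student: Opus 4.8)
The plan is to combine Theorem~\ref{thm:EMX_imply_compr} with Lemma~\ref{lem:non_to_uniform} in an essentially mechanical way, so the main work is just checking that the hypotheses line up. First I would invoke Theorem~\ref{thm:EMX_imply_compr}: since $\Fcal$ is union bounded and $(1/3,1/3)$-EMX learnable by a learner $G$ with sample size $d_0 = m(1/3,1/3)$, we obtain that for every $m \in \N$ there is an $m \to 3d_0/2$ monotone compression scheme for $\Fcal$. By the definition of non-uniform monotone compression (``Uniformity'' subsection), this is precisely the statement that $\Fcal$ has a non-uniform monotone compression scheme of size $d := 3d_0/2$ (or $\lceil 3d_0/2 \rceil$, to be safe about integrality --- I would note this rounding explicitly so the statement is clean).

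Next I would feed this into Lemma~\ref{lem:non_to_uniform} with the given non-decreasing $f : \N \to \N$ satisfying $\lim_{n\to\infty} f(n) = \infty$. The lemma then directly yields a uniform monotone compression scheme for $\Fcal$ that compresses a sample of size $m$ to a subsample of size $d = 3d_0/2$ plus $f^{-1}(m)$ extra bits of side information, which is exactly the conclusion of the corollary. The only subtlety worth a sentence is that Lemma~\ref{lem:non_to_uniform} is stated for \emph{monotone} $f$ while the corollary says \emph{non-decreasing}; these are the same notion here, and $f^{-1}(m)$ should be read as $\min\{ n : f(n) \geq m \}$, matching the $m'$ in the lemma's proof.

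I do not expect any real obstacle: this is a routine corollary that merely packages the preceding theorem and lemma. The one thing I would be slightly careful about is making sure the constant $3d_0/2$ is transported correctly through Lemma~\ref{lem:non_to_uniform} --- the lemma preserves the compression size $d$ verbatim and only adds $f^{-1}(m)$ bits, so the size remains ``at most $3d_0/2$ plus $f^{-1}(m)$ bits'' as claimed. Accordingly, the proof is two lines.

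\begin{proof}
By Theorem~\ref{thm:EMX_imply_compr}, for every $m \in \N$ the class $\Fcal$ admits an $m \to \lceil 3d_0/2\rceil$ monotone compression scheme; that is, $\Fcal$ has a non-uniform monotone compression scheme of size $d := \lceil 3 d_0/2 \rceil$. Now apply Lemma~\ref{lem:non_to_uniform} to this scheme and to the given non-decreasing $f : \N \to \N$ with $\lim_{n\to\infty} f(n) = \infty$. The lemma produces a uniform monotone compression scheme for $\Fcal$ that compresses any sample of size $m$ to a subsample of size $d \leq 3 d_0 / 2$ together with $f^{-1}(m)$ extra bits of side information, where $f^{-1}(m) = \min\{ n : f(n) \geq m\}$. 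This is exactly the claimed conclusion.
\qed
\end{proof}
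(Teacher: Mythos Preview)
Your proposal is correct and matches the paper's approach exactly: the paper itself does not give a separate proof of this corollary but simply notes that one applies Lemma~\ref{lem:non_to_uniform} to the non-uniform scheme produced by Theorem~\ref{thm:EMX_imply_compr}. Your added remarks on integrality and the meaning of $f^{-1}(m)$ are harmless clarifications.
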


Note that we may pick any $f$ that tends to infinity arbitrarily fast
(e.g.\ $f^{-1}(m)=\log^*(m)$).

\begin{proof}[Theorem~\ref{thm:EMX_imply_compr}]
First, note that if $G$ is an $(\epsilon=1/3, \delta=1/3)$-EMX learner for $\Fcal$ and $G'$ is an EMX learner such that for all $S \subseteq X$, $G'(S) \supset G(S)$, then $G'$ is also an $(\epsilon=1/3, \delta=1/3)$-EMX learner for $\Fcal$.  Furthermore, we may assume that $\bigcup \Fcal=X$. It follows now, by the union boundedness of $\Fcal$, that without loss of generality we may assume that for any $S$, $G(S) \supset S$.

Pick some $h \in \Fcal$ and let $S = (x_1,\ldots,x_{m'})$ with $m'\leq m$ 
so that $x_i \in h$ for each $i$. 
We would like to exhibit a reconstruction function $\eta$,
and argue that there is a subsample $S'\subseteq S$ of size at most $3d_0/2$
such that $\eta[S']\supseteq S$.
To this end we first extend $G$ to samples of size larger than $d_0$ by setting
\[E'(S) = \bigcup_{S'\subseteq S, \lvert S'\rvert \leq d_0}G(S').\]
Apply the closure property of $\Fcal$ to pick  some $E(S)\in \Fcal$ such that $E(S) \supset E'(S)$.

The following process describes the compression (i.e.\ the subsample $S'$).
\begin{quote}
Set $S_0=S$.\\
If there is $x\in S_i$ such that\footnote{Subtraction as sequences.} $x\in E(S_{i} \setminus\{x\})$ then
pick such an $x$ and set $S_{i+1} = S_i\setminus\{x\}$.\\
Else, return $S_i$.
\end{quote}
Note that, by construction,  for every $i$ $S_{i-1} \subseteq E(S_i)$. Furthermore, for any $S' \supset S_i$, $S_{i-1} \subseteq E(S')$.
Since $G$ is an $(\epsilon=1/3, \delta=1/3)$-learner, 
it follows that the above process will proceed 
for at least as long as $\lvert S_i\rvert > 3 d_0 /2$. 
Indeed, assume towards a contradiction that $2|S_i|/3 >  d_0$ and for every $x \in S_i$
we have $x \not \in E(S_i \setminus \{x\})$.
This means that each $x \in S_i$ is not contained in any output of $G$
on $d_0$ samples from $S_i$ that do not contain $x$.
Let $P$ be the uniform distribution on $S_i$.
Thus, $\sup \{ \ex_P(f) : f\in H\} = 1$ but for every $S \in X^{d_0}$ we have
$\ex_P(G(S)) < 2/3$.

Let $S'$ be the output of the above process.
The reconstruction acts on $S'$ by applying $E$ on it $m$ times:
\[\eta[S']=E\bigr(E(\ldots E(S')\bigl).\] 
By construction of $S'$ and monotonicity of $G$ it follows that $\eta[S']\supseteq S$
as required.

\qed
\end{proof}

\begin{corollary}
\label{cor:main}
The following statements are equivalent
for a union bounded class $\Fcal$:
\begin{description}
\item[-- Learnability:] $\Fcal$ is EMX-learnable.
\item[-- Weak learnability:] $\Fcal$ is weakly EMX-learnable.
\item[-- Weak compressibility:] There exists an $(m+1) \to m$  monotone compression scheme for $\Fcal$ for some $m\in\N$.
\item[-- Strong non-uniform compressibility:] There is $d\in\N$ such that for every $m\in\N$ there is an $m \to d$ monotone
compression scheme for $\Fcal$.
\item[-- Strong uniform compressibility:] There is $d\in\N$ such that for every $f:\N\to\N$ with $\lim_{n\to\infty}f(n)=\infty$,
there is an $m \to d$  monotone compression scheme for $\Fcal$ that uses plus $f^{-1}(m)$ bits of side information.
\end{description}

\end{corollary}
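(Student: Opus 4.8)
The strategy is to close a cycle of implications among the five statements, using the theorems already established plus Lemma~\ref{lem:non_to_uniform}. I would organize it as: Strong uniform compressibility $\Rightarrow$ Strong non-uniform compressibility $\Rightarrow$ Learnability $\Rightarrow$ Weak learnability $\Rightarrow$ Weak compressibility $\Rightarrow$ Strong uniform compressibility, thereby linking all five. Several arrows are immediate. The implication ``Strong uniform $\Rightarrow$ Strong non-uniform'' is trivial: a single $\eta$ that works for all $m$ (even with side information, which by the side-information remark is absorbed into the definition) is in particular a family of $m\to d'$ schemes for a fixed $d'$ not depending on $m$ (one must note the $f^{-1}(m)$ side-information bits are $O(1)$ for any fixed $m$, or simply pick $f$ growing fast enough that $f^{-1}$ is bounded on the relevant range — but cleanest is to observe non-uniform only asks for existence per $m$). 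The implication ``Strong non-uniform compressibility $\Rightarrow$ Learnability'' is exactly Theorem~\ref{thm:compr_imply_EMX} (or Theorem~\ref{thm:leaveoneout} using union boundedness). ``Learnability $\Rightarrow$ Weak learnability'' is immediate since weak learnability is just the $(\epsilon=1/3,\delta=1/3)$ special case.

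The two substantive arrows are the ones that pass through the combinatorial characterization. ``Weak learnability $\Rightarrow$ Weak compressibility'': if $\Fcal$ is $(1/3,1/3)$-EMX learnable with sample size $d_0$, then Theorem~\ref{thm:EMX_imply_compr} gives, for every $m$, an $m\to 3d_0/2$ monotone compression scheme. In particular, taking $m = \lceil 3d_0/2\rceil + 1$ yields an $(m+1)\to m$ scheme for this particular $m$ — actually, any scheme that compresses samples of size $3d_0/2+1$ down to size $3d_0/2$ witnesses weak compressibility with the constant $m = 3d_0/2$ (one should be slightly careful with the ceiling/floor but it is routine). Conversely, ``Weak compressibility $\Rightarrow$ Strong uniform compressibility'' is where the real content sits: one must bootstrap a single $(m_0+1)\to m_0$ scheme into an $m\to d$ scheme for all $m$ with $d$ fixed. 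I expect this to be the main obstacle.

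For that bootstrapping step the plan is an iteration/telescoping argument. Given an $(m_0+1)\to m_0$ monotone compression scheme $\eta_0$ for a union-bounded $\Fcal$, one repeatedly applies it: given a sample of size $m$ sitting inside some $h\in\Fcal$, whenever the current subsample has size $>m_0$ one isolates $m_0+1$ of its points, compresses them to $m_0$ via $\eta_0$, and — crucially using union boundedness — merges the reconstructed set into a running ``cover'' in $\Fcal$ so that after finitely many rounds one is left with a subsample of size $\le m_0$ whose iterated reconstruction (applying a union-bounded envelope of $\eta_0$ the appropriate number of times, as in the proof of Theorem~\ref{thm:EMX_imply_compr}) contains the original sample. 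This shows strong non-uniform compressibility with $d = m_0$; then Lemma~\ref{lem:non_to_uniform} upgrades it to the uniform version with $f^{-1}(m)$ side-information bits, closing the cycle. The care needed is in the counting — making sure each compression round strictly decreases the subsample size and that the number of rounds (hence the number of nested applications of the envelope map) is finite and recordable — but this is exactly the mechanism already used in Theorem~\ref{thm:EMX_imply_compr}, so it adapts directly.
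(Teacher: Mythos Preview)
Your proposal is essentially correct and matches the route the paper implicitly has in mind: the corollary is stated immediately after Theorem~\ref{thm:compr_imply_EMX}, Theorem~\ref{thm:leaveoneout}, Theorem~\ref{thm:EMX_imply_compr}, and Lemma~\ref{lem:non_to_uniform} with no separate proof, and is meant to be read off from those results together with the bootstrapping mechanism. You correctly identify that the only arrow requiring work beyond citing the theorems is ``Weak compressibility $\Rightarrow$ Strong non-uniform compressibility,'' and your sketch for it is right: given an $(m_0+1)\to m_0$ scheme $\eta_0$, define $E(S)\in\Fcal$ to dominate $\bigcup_{S'\subseteq S,\,|S'|\le m_0}\eta_0(S')$ (using union boundedness) and run the ``remove an $x$ with $x\in E(S\setminus\{x\})$'' process exactly as in the proof of Theorem~\ref{thm:EMX_imply_compr}. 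Whenever $|S_i|\ge m_0+1$, any $(m_0+1)$-subset of $S_i$ produces such an $x$ by the compression property, so the process halts at size $\le m_0$, and iterating $E$ (at most $m$ times) gives the reconstruction.

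One small wrinkle: your justification for ``Strong uniform $\Rightarrow$ Strong non-uniform'' is not quite right. You cannot choose $f$ so that $f^{-1}$ is bounded (any $f:\N\to\N$ with $f(n)\to\infty$ has $f^{-1}(m)\to\infty$), and the ``strong non-uniform'' clause still demands a \emph{single} $d$ working for all $m$, so the $f^{-1}(m)$ side-information bits cannot simply be absorbed into a fixed~$d$. The clean fix is to route this arrow differently: Strong uniform compressibility (with any fast-growing $f$) gives, for all sufficiently large $m$, a scheme whose encoded size $d+O(f^{-1}(m))$ is strictly less than $m$; that is already Weak compressibility, and your bootstrapping step then closes the cycle. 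Alternatively, go Strong uniform $\Rightarrow$ Learnability directly via (the proof of) Theorem~\ref{thm:compr_imply_EMX}, which tolerates a compression size growing sublinearly in the sample size.
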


\section{Monotone compressions for classes of finite sets} \label{sec: continuum_compress}

In this section we analyze monotone compression for classes of the form
 $$\Fcal^X_{fin} = \{h \subseteq X : h ~\mbox{is a finite set} \}.$$
 We show that such a class has a uniform monotone compression scheme to size $k+1$ if and only if $|X| \leq \aleph_k$. Uncoiling Definition~\ref{def:mon_comp}, a uniform $k$-size monotone compression scheme for  $\Fcal^X_{fin}$ is the following:
a function $\eta: X^{\leq k}\to \Fcal^X_{fin}$ such that, for every $x_1,\ldots,x_m \in X$ there exists $x_{i_1},\ldots,x_{i_{\ell}}$ where $\ell\leq k$ with 
 \[\{x_1,\ldots,x_m\}\subseteq \eta(x_{i_1},\ldots,x_{i_\ell}).\]
 In terms of Definition~\ref{def: AB}, Alice compresses $S=\{x_1,\ldots,x_m\}$ to $S'=\{x_{i_1},\ldots,x_{i_\ell}\}$ and Bob reconstructs $\eta(S')$. 
 
  \begin{theorem} \label{thm_compressionUB}
 For every $k \in \naturals$ and every domain set $X$ of cardinality $\leq \aleph_k$, the class $\Fcal^X_{fin}$ has a $(k+1)$-size monotone compression scheme.
 
 \end{theorem}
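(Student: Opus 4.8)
The plan is to prove this by induction on $k$, using the characterization of cardinals below $\aleph_k$ via a tree of countable ``predecessor'' functions (a Kurepa-style / Sierpi\'nski-style argument). The base case $k=0$ asks for a $1$-size monotone compression scheme for $\Fcal^X_{fin}$ when $\lvert X\rvert \le \aleph_0$; here I would enumerate $X = \{a_0, a_1, a_2, \ldots\}$ (or a finite list) and set $\eta(a_n) = \{a_0, \ldots, a_n\}$. Given any finite $S = \{x_1,\ldots,x_m\} \subseteq X$, Alice sends the single element $x_{i}$ of largest index, and Bob reconstructs an initial segment of the enumeration containing all of $S$. This is exactly the countable example described after Definition~\ref{def: AB}.

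For the inductive step, assume the statement holds for $k$ and suppose $\lvert X\rvert \le \aleph_{k+1}$. The key structural fact I would invoke is that every set of size at most $\aleph_{k+1}$ can be written as an increasing union $X = \bigcup_{\alpha < \omega_{k+1}} X_\alpha$ where each $X_\alpha$ has cardinality at most $\aleph_k$; equivalently, there is a function $P : X \to [X]^{\le \aleph_k}$ (assigning to each $x$ a ``small'' predecessor set containing $x$) such that for every finite $F \subseteq X$ there is a single $x \in F$ with $F \subseteq P(x)$ — this is the standard consequence of the fact that a well-order of type $\omega_{k+1}$ has the property that every finite subset has a maximum, and each initial segment has size $\le \aleph_k$. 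Fix such a $P$. By the induction hypothesis, for each $x$ the class $\Fcal^{P(x)}_{fin}$ has a $(k+1)$-size monotone compression scheme $\eta_x : (P(x))^{\le k+1} \to \Fcal^{P(x)}_{fin}$.

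Now I would build the $(k+2)$-size scheme $\eta$ for $\Fcal^X_{fin}$ as follows. Given a finite $S \subseteq X$: Alice first picks the element $x^\ast \in S$ with $S \subseteq P(x^\ast)$ (using the property of $P$), then applies the inductive compressor $\eta_{x^\ast}$ inside $P(x^\ast)$ to compress $S$ to some $S'' \subseteq S$ of size $\le k+1$ with $S \subseteq \eta_{x^\ast}(S'')$, and finally sends $S' = S'' \cup \{x^\ast\}$, which has size $\le k+2$. For the reconstruction, Bob receives a set $S'$ of size $\le k+2$; he needs to recover $x^\ast$ from it — here I would use the side-information convention (allowed by the paragraph on side-information after Definition~\ref{def:mon_comp}, or by a canonical choice such as taking $x^\ast$ to be distinguished by ordering/repetition) so that $x^\ast$ is identifiable, then sets $\eta(S') = \eta_{x^\ast}(S' \setminus \{x^\ast\})$, which is a finite subset of $P(x^\ast) \subseteq X$, hence in $\Fcal^X_{fin}$, and contains $S$ by the inductive guarantee. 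Strictly this gives a scheme of size $k+2$ for $\lvert X\rvert \le \aleph_{k+1}$, which is exactly what the theorem claims (replace $k$ by $k+1$).

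\textbf{Main obstacle.} The delicate point is the existence of the predecessor function $P$ with the finite-subset-maximum property for all $\lvert X\rvert \le \aleph_k$ — this requires genuinely using the ordinal structure (well-ordering $X$ in order type $\le \omega_k$ so that every finite subset has a largest element whose predecessors form a set of size $< \aleph_k$, i.e.\ of size $\le \aleph_{k-1}$), and then invoking the induction hypothesis on those predecessor sets. A second subtlety is bookkeeping the side information: one must make sure the recursion does not blow up the constant, i.e.\ that only $O(1)$ extra bits (or a single extra element) per level are needed, and ideally that across all $k$ levels one stays at exactly $k+1$ elements rather than accumulating $k$ markers; I expect the cleanest route is to carry the ``marker'' elements as genuine elements of the subsample (they are already in $S$) and recover them via the ordering of the transmitted sequence, so that no actual side-information bits are needed and the size is exactly $k+1$ as stated.
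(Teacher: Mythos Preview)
Your proposal is correct and follows essentially the same approach as the paper: induction on $k$, well-order $X$ in type $\le \omega_{k}$, send the $\prec$-maximal element $x^\ast$ of $S$ as a marker, and recurse inside its initial segment (your predecessor set $P(x^\ast)$ is exactly the paper's $\{y : y \prec_k x^\ast\}$). The only cosmetic differences are that the paper compresses $S\setminus\{x^\ast\}$ rather than $S$ and uses strong induction (on all $k'<k$) rather than the step $k\mapsto k+1$; your anticipated resolution of the bookkeeping issue---recovering $x^\ast$ from the position in the transmitted sequence---is precisely what the paper does as well.
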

 
 \begin{proof}
 The proof is by induction on $k$. For $k=0$, let $\prec$ be an ordering of $X$ of order type $\omega$ (that is, isomorphic to the ordering of the natural numbers). Given a finite $S \subseteq X$ compress it to its $\prec$ - maximal element. The decompression function is simply $\eta[x] = \{y: y \preceq x\}$.
 
 Assume the claim holds for all $k' < k$ and let $X$ be a set of cardinality $\aleph_k$. Let $\prec_k$ be an ordering of $X$ of order type $\omega_k$ (that is, a well ordering in which every proper initial segment has cardinality $< \aleph_k$). For every $x \in X$, let $k(x) < k$ be such that $|\{y: y \prec_k x\}|=\aleph_{k(x)}$.

 Given a finite $S \subseteq X$ let $x$ be its $\prec_k$ maximal element. Note that $S \setminus \{x\} \subseteq \{y: y \prec_k x\}$.
 By the induction hypothesis there exists a monotone compression scheme of size $k(x)+1$ for $\Fcal^{\{y: y \prec_k x\}}_{fin}$. Compress  $S \setminus \{x\}$ using that scheme and add $x$ to the compressed set. The decompression uses $x$ to determine the domain subset $\{y: y \prec_k x\}$ and then applies the compression of that subset of size $\leq k$  to the remainder of the compressed image of $S$.
 
 \end{proof}
 \begin{theorem} \label{thm_compressionLB}
 For every $k \in \naturals$ and every domain set $X$ if the class $\Fcal^X_{fin}$ has a $(k+1)$-size monotone compression scheme then the cardinality of $X$ is $\leq \aleph_k$. Furthermore, the result already follows from just being able to monotonically compress samples of size $k+2$ to subsamples of size $\leq k+1$.

 \end{theorem}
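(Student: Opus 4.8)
The plan is to prove the contrapositive: if $|X| \geq \aleph_{k+1}$, then there is no monotone compression scheme for $\Fcal^X_{fin}$ that compresses samples of size $k+2$ to subsamples of size $\leq k+1$. We argue by induction on $k$, using the combinatorics of free sets / the Erd\H{o}s--Rado style ``onto'' phenomenon that forces large cardinals to collapse under set-valued maps of bounded finite arity. The base case $k=0$: suppose $\eta : X^{\leq 1}\to \Fcal^X_{fin}$ compresses every pair $\{x_1,x_2\}$ to a single element $x_{i}$ with $\{x_1,x_2\}\subseteq \eta(x_i)$. Fix any $x \in X$; then $\eta(x)$ is a finite set, and every $y \in X$ must, when paired with some other point, be ``covered'' — more precisely, for each $y$ there is $z$ with $y\in\eta(z)$ (take $S'=\{z\}$ the compression of $\{y, w\}$ for any $w$, so $y\in\eta(z)$, and note $z\in\{y,w\}$). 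Taking $w=x$ fixed: for every $y$, either $y\in\eta(x)$ or $y\in\eta(y)$; since $\eta(x)$ is finite, all but finitely many $y$ satisfy $y\in\eta(y)$. Now consider a triple or iterate this: with a fixed finite ``anchor'' the compression of $\{y_1,y_2\}$ (both outside $\eta(x)$, both distinct) is one of $y_1,y_2$, and its $\eta$-image, a finite set, must contain the other; this sets up a map assigning to each pair an element of a countable-from-below structure, forcing $|X|\leq\aleph_0$ by a pigeonhole on the Hanf-number-type bound. I would phrase this cleanly as: the function $y \mapsto \eta(y)$ together with a fixed finite anchor witnesses that $X$ is covered by countably many finite sets indexed along a well-order, hence $|X|\leq\aleph_0$.

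For the inductive step, assume the statement for $k-1$ and suppose toward contradiction that $|X|=\aleph_{k+1}$ (it suffices to handle this case, as larger $X$ contains such a subset and restriction of a scheme to a subdomain is still a scheme) and that $\eta:X^{\leq k+1}\to\Fcal^X_{fin}$ compresses $(k+2)$-samples to $(\leq k+1)$-subsamples. The idea is to fix one coordinate: for $x\in X$, let $\eta_x(\cdot) = \eta(x,\cdot)$ restricted to arguments of length $\leq k$; I want to find a single $x$ and a subset $Y\subseteq X$ of cardinality $\aleph_k$ such that $\eta_x$ restricted to $Y^{\leq k}$ is a monotone compression scheme for $\Fcal^Y_{fin}$ compressing $(k+1)$-samples to $\leq k$ subsamples — contradicting the inductive hypothesis (applied with $k-1$ in place of $k$, i.e., a $((k-1)+1)=k$-size scheme on a domain of size $\aleph_k$ is impossible since $\aleph_k > \aleph_{k-1}$). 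To produce such $x$ and $Y$: given a $(k+1)$-sample $T\subseteq Y$ of points we want covered, append a carefully chosen ``pivot'' point $x$ (the $\prec$-largest point, for a suitable well-ordering of type $\omega_{k+1}$) to get a $(k+2)$-sample; its compression $S'$ has size $\leq k+1$. If $x\in S'$, then $S'\setminus\{x\}$ has size $\leq k$ and $\eta(S')=\eta_x(S'\setminus\{x\})\supseteq T$ gives exactly what we need. The remaining case, $x\notin S'$, must be driven to a contradiction or ruled out by the choice of pivot: here $S'\subseteq T$ covers all of $T\cup\{x\}$, so in particular the $\prec$-maximal element $x$ lies in $\eta(S')$ where all of $S'$ is $\prec$-below $x$; a counting/fixed-point argument (each $x$ lies in $\eta$ of only countably-from-below-many subsamples drawn from its initial segment) then bounds $|X|$ as in the base case.

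The main obstacle I anticipate is the case analysis in the inductive step — specifically, forcing the ``pivot escapes the compression'' ($x \notin S'$) case into a contradiction uniformly, and simultaneously securing a \emph{single} large $Y$ on which the fixed-pivot scheme works for \emph{all} relevant samples rather than a different pivot per sample. The clean way to handle this is a two-stage argument: first, a counting lemma showing that for a fixed $x$, the set of points $y$ that can ever be ``covered using $x$ as pivot'' (i.e., $y\in\eta(S')$ for some $S'$ compressing a sample from an initial segment below $x$, with $x\in S'$) together with the finitely many points in $\eta$-images not using $x$ nontrivially, exhausts the initial segment below $x$ up to a set of size $\leq\aleph_{k-1}$ by induction; second, patching these together along the well-order of type $\omega_{k+1}$ to conclude $\aleph_{k+1}=|X|\leq \aleph_k$, a contradiction. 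I would also note explicitly that the ``Furthermore'' claim is automatic since the whole argument only ever feeds samples of size $k+2$ into $\eta$.
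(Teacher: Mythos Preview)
Your inductive skeleton is the right one and coincides with the paper's: reduce a $(k+2)\to(k+1)$ scheme on $X$ to a $(k+1)\to k$ scheme on a strictly smaller infinite $X'$, and iterate down to the trivial observation that no infinite set admits a $1\to 0$ scheme (since $\eta(\emptyset)$ is a single finite set). The gap is entirely in how you choose the pivot $x$.

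You take $x$ to be the $\prec$-maximum in a well-ordering of type $\omega_{k+1}$, which then forces you to confront the ``bad case'' $x\notin S''$. Your treatment of that case is not a proof: the claim that ``each $x$ lies in $\eta$ of only countably-from-below-many subsamples drawn from its initial segment'' is false in general (the initial segment $I_x$ has $\leq\aleph_k$ many $(\le k{+}1)$-subsets, and $x$ could sit in $\eta(S')$ for every one of them), and the subsequent ``two-stage'' patch invokes the inductive hypothesis in a way that is neither stated precisely nor evidently true. The base case $k=0$ suffers the same vagueness (``Hanf-number-type bound'', ``countable-from-below structure'').

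The paper eliminates the bad case by choosing the pivot differently, and this is the one idea you are missing. Fix \emph{first} any infinite $X'\subset X$ with $|X'|<|X|$. The set
\[
Y \;=\; \bigcup_{\substack{S\subset X'\\ |S|\le k+1}}\eta(S)
\]
has cardinality at most $|X'|$ (there are $|X'|$ many such $S$ and each $\eta(S)$ is finite), so one may pick $x\in X\setminus Y$. With this $x$ the bad case is \emph{impossible}: if the compression $S''$ of $T\cup\{x\}$ (for $T\subset X'$, $|T|=k+1$) omitted $x$, then $S''\subset X'$ with $|S''|\le k+1$, whence $x\notin\eta(S'')\subseteq Y$ by choice of $x$, contradicting $x\in T\cup\{x\}\subseteq\eta(S'')$. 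Hence $x\in S''$ always, and $T\mapsto S''\setminus\{x\}$ together with the decompression $T'\mapsto \eta(\{x\}\cup T')\cap X'$ is the desired $(k+1)\to k$ scheme on $X'$. No well-ordering, no case split, no secondary counting is needed; and the argument already uses only $(k+2)$-samples, giving the ``Furthermore'' clause for free as you note.
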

  \begin{proof}
 We prove this theorem through the following lemma.
 
 \begin{lemma}
 \label{lem:DecreaseSize}
 Let $k \in \naturals$ and 
let $X' \subset X$ be infinite sets of cardinalities $|X'| < |X|$.
If $\Fcal^X_{fin}$ has a $(k+1) \to k$ monotone compression scheme, then 
$\Fcal^{X'}_{fin}$ has a $k \to (k-1)$ monotone compression scheme. 
 
 \end{lemma}
 
 Once we show that, the theorem follows by noting that no infinite set has monotone compression of size 0 of its size 1 samples.
 
 \begin{proof}[Lemma~\ref{lem:DecreaseSize}]
Let $\eta$ be a decompression function for $\Fcal^X_{fin}$ such that for every $S \subset X$ of size $k+1$ there exists $S' \subset S$ of size $|S'| \leq k$ such that $\eta(S') \supseteq S$. 

Since $X'$ is infinite
the set $Y = \bigcup_{S \subset X' : |S| \leq k} \eta(S)$ is of the same cardinality as $|X'|$.
Since $|X| > |X'|$, there is therefore $x \in X$ that is not in $Y$.

Now, for every $S' \subset X$ of size $k$, the compression $S''$ of $S= \{x\} \cup S'$ must contain $x$ since otherwise $x \notin \eta(S'')$.
 Therefore $S'' \setminus \{x\}$ is a subset of $S'$ of size $k-1$ such that $\eta(S'') \supset S'$. So we have a compression of size $k-1$ for the $k$-size subsets of $Y$:
the compression is of the form $S' \mapsto S'' \setminus \{x\}$
 and the decompression is obtained by applying $\eta$
 and taking the intersection of the outcome with $Y$.\qed
 \end{proof}
\qed \end{proof}

Well known theorems from set theory \cite{Easton,Jech,Kunen} now imply,  under the assumption that ZFC is consistent:
\begin{corollary}
The existence (and size when it exists) of monotone compression schemes for the class $\Fcal^{\reals}_{fin}$ 
is independent of ZFC set theory.
\end{corollary}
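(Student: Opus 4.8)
The plan is to combine Theorems~\ref{thm_compressionUB} and~\ref{thm_compressionLB} into a single clean equivalence, and then feed the resulting cardinal characterization into the classical independence machinery for cardinal arithmetic. First I would record the exact equivalence these two theorems give us: for every $k\in\naturals$, the class $\Fcal^X_{fin}$ has a $(k+1)$-size monotone compression scheme if and only if $\lvert X\rvert\leq\aleph_k$. (One direction is Theorem~\ref{thm_compressionUB}; the converse is Theorem~\ref{thm_compressionLB}, and the ``furthermore'' clause there even lets us phrase the converse in terms of compressing samples of size $k+2$ to size $k+1$.) Specializing to $X=\reals$, whose cardinality is $2^{\aleph_0}$, we get that $\Fcal^{\reals}_{fin}$ has a monotone compression scheme of size $k+1$ if and only if $2^{\aleph_0}\leq\aleph_k$. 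Consequently $\Fcal^{\reals}_{fin}$ has a monotone compression scheme of \emph{some} finite size if and only if $2^{\aleph_0}<\aleph_\omega$, i.e. the continuum is smaller than the first uncountable limit cardinal; and \emph{when} such a scheme exists its minimal size equals $k+1$ where $\aleph_k$ is the least aleph $\geq 2^{\aleph_0}$.

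Next I would invoke the independence results. By G\"odel's consistency of $\mathsf{GCH}$ with $\mathsf{ZFC}$~\cite{godel1940consistency}, there is a model in which $2^{\aleph_0}=\aleph_1$; in that model $\Fcal^{\reals}_{fin}$ has a monotone compression scheme of size $2$ (and, being $\leq\aleph_1$, of no smaller positive size in general, though size $2$ already witnesses existence). On the other hand, by Cohen's method of forcing~\cite{cohen1963independence,cohen1964independence} — more precisely by the Levy--Solovay / Easton-style analysis of the continuum function, see~\cite{Easton,Jech,Kunen} — it is consistent with $\mathsf{ZFC}$ that $2^{\aleph_0}\geq\aleph_\omega$ (for instance $2^{\aleph_0}=\aleph_{\omega+1}$, which respects K\"onig's theorem $\operatorname{cf}(2^{\aleph_0})>\aleph_0$). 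In such a model $2^{\aleph_0}>\aleph_k$ for every finite $k$, so by Theorem~\ref{thm_compressionLB} the class $\Fcal^{\reals}_{fin}$ has no monotone compression scheme of any finite size. Thus the bare \emph{existence} of a monotone compression scheme for $\Fcal^{\reals}_{fin}$ is neither provable nor refutable in $\mathsf{ZFC}$, assuming $\mathsf{ZFC}$ is consistent; and on the side where schemes exist, their exact size is likewise not pinned down by $\mathsf{ZFC}$ (e.g. $2^{\aleph_0}=\aleph_1$ versus $2^{\aleph_0}=\aleph_2$ give minimal size $2$ versus $3$).

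The step I expect to be the main obstacle is not the learning-theoretic content — that is entirely carried by Theorems~\ref{thm_compressionUB} and~\ref{thm_compressionLB}, which reduce everything to a clean statement about $2^{\aleph_0}$ — but rather citing the set-theoretic inputs at the right level of precision. One must be careful that the relevant statement is ``$2^{\aleph_0}<\aleph_\omega$'', and that \emph{both} its affirmation (via $\mathsf{GCH}$) and its negation (via forcing a large continuum of uncountable cofinality) are consistent relative to $\mathsf{Con}(\mathsf{ZFC})$; the cofinality constraint $\operatorname{cf}(2^{\aleph_0})>\omega$ from K\"onig's theorem is exactly why one cannot simply say ``$2^{\aleph_0}=\aleph_\omega$'', and getting this detail right is the only genuinely delicate point. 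Everything else is a direct substitution of $\lvert\reals\rvert=2^{\aleph_0}$ into the equivalence and an appeal to standard references, so the corollary follows once the quoted theorems and the independence of the continuum hypothesis (and its strong failures) are in hand.
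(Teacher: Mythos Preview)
Your proposal is correct and follows exactly the route the paper takes: the paper simply states that ``well known theorems from set theory~\cite{Easton,Jech,Kunen}'' combined with Theorems~\ref{thm_compressionUB} and~\ref{thm_compressionLB} yield the corollary, and you have spelled out precisely those well known theorems (G\"odel's $L$ for $2^{\aleph_0}=\aleph_1$, Cohen/Easton forcing for $2^{\aleph_0}\geq\aleph_{\omega+1}$) together with the equivalence the two theorems provide. Your discussion of the K\"onig cofinality constraint and the size dependence on the exact value of $2^{\aleph_0}$ is more detail than the paper gives, but it is all consistent with the paper's implicit argument.
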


\begin{corollary} \label{cor:EMX independence}
The EMX-learnability of (and learning rates when it is learnable) of $\Fcal^{\reals}_{fin}$ with respect to the class of all probability distributions over the real line that have countable support is independent of ZFC set theory.
\end{corollary}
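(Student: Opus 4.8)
The plan is to prove, \emph{inside ZFC}, that $\Fcal^{\reals}_{fin}$ is EMX-learnable if and only if the cardinal statement $2^{\aleph_0}<\aleph_{\omega}$ holds, to similarly pin down the optimal learning rate in terms of the value of the continuum, and then to quote the classical fact that ``$2^{\aleph_0}<\aleph_{\omega}$'' is neither provable nor refutable in ZFC.

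First I would observe that $\Fcal^{\reals}_{fin}$ is closed under finite unions, hence union bounded, so Corollary~\ref{cor:main} applies: EMX-learnability of $\Fcal^{\reals}_{fin}$ is equivalent to ``strong non-uniform compressibility'', i.e.\ to the existence of a constant $d$ with an $m\to d$ monotone compression scheme for every $m$ (equivalently, to the existence of an $(m+1)\to m$ scheme for some $m$). Next I link this to the cardinal arithmetic of Section~\ref{sec: continuum_compress}, using $|\reals|=2^{\aleph_0}$. If $2^{\aleph_0}<\aleph_{\omega}$ then $|\reals|\le\aleph_k$ for some $k\in\N$, so Theorem~\ref{thm_compressionUB} gives a (uniform) $(k+1)$-size monotone compression scheme for $\Fcal^{\reals}_{fin}$, and Theorem~\ref{thm:compr_imply_EMX} turns it into an EMX-learner with sample complexity $O\bigl((k\log(k/\eps)+\log(1/\delta))/\eps^2\bigr)$; hence the class is learnable. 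Conversely, if $\Fcal^{\reals}_{fin}$ is EMX-learnable then by Corollary~\ref{cor:main} there is an $(m+1)\to m$ monotone compression scheme for some $m\ge 1$ (here $m\ge 1$ since an infinite domain admits no $1\to 0$ scheme), and this is exactly the hypothesis ``compress samples of size $k+2$ to subsamples of size $\le k+1$'' of Theorem~\ref{thm_compressionLB} with $k=m-1$, so $2^{\aleph_0}=|\reals|\le\aleph_{m-1}<\aleph_{\omega}$. Since Corollary~\ref{cor:main} and Theorems~\ref{thm_compressionUB}, \ref{thm_compressionLB}, \ref{thm:compr_imply_EMX} are all ZFC theorems, this equivalence is itself provable in ZFC.

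It then remains to recall that ``$2^{\aleph_0}<\aleph_{\omega}$'' is independent of ZFC, assuming ZFC is consistent: in G\"odel's constructible universe CH holds, so $2^{\aleph_0}=\aleph_1<\aleph_{\omega}$ and the class is learnable~\cite{godel1940consistency}; on the other hand, adding $\aleph_{\omega+1}$ mutually generic Cohen reals over a model of GCH is a ccc forcing, hence preserves all cardinals and cofinalities, and yields a model in which $2^{\aleph_0}=\aleph_{\omega+1}>\aleph_{\omega}$, where the class is not learnable~\cite{cohen1963independence}. (One may also note that by K\"onig's theorem $\mathrm{cf}(2^{\aleph_0})>\omega$, so $2^{\aleph_0}$ can never equal $\aleph_{\omega}$ itself, although it can consistently exceed it.) Combining the ZFC-provable equivalence with these two consistency results gives the independence of EMX-learnability.

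For the ``learning rates'' clause I would exploit the two-sidedness of the compression bounds. Working in any model and letting $k$ be least with $2^{\aleph_0}\le\aleph_k$, Theorem~\ref{thm_compressionUB} gives a $(k+1)$-size scheme, hence the rate above with that $k$, while Theorem~\ref{thm_compressionLB} applied with $k-1$ in place of $k$ rules out any $k$-size scheme; chaining this lower bound with Theorem~\ref{thm:EMX_imply_compr} forces the optimal weak-learning sample size $m(1/3,1/3)$ to grow at least linearly in $k$. Since one can make $2^{\aleph_0}=\aleph_k$ for any prescribed $k\in\N$ (add $\aleph_k$ Cohen reals over a GCH model), the sample-complexity function of $\Fcal^{\reals}_{fin}$ is also not determined by ZFC. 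There is no genuine obstacle left beyond index bookkeeping: one must match ``$(k+1)$-size scheme'' with ``$|X|\le\aleph_k$'', and the ``$(m+1)\to m$'' form of weak compressibility from Corollary~\ref{cor:main} with the ``$k+2\to k+1$'' form in Theorem~\ref{thm_compressionLB}, and one must be sure the reduction runs entirely inside ZFC so that ``independence'' is legitimate; the set theory invoked is completely standard.
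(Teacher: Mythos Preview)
Your proposal is correct and follows essentially the same route as the paper: the paper does not give a separate proof of this corollary but simply states it as an immediate consequence of Theorems~\ref{thm_compressionUB} and~\ref{thm_compressionLB} together with Corollary~\ref{cor:main} and the standard independence results in set theory, which is exactly the chain you spell out. Your proof is in fact more detailed than the paper's (which merely points to \cite{Easton,Jech,Kunen}), including the explicit index-matching between the $(m+1)\to m$ weak-compressibility form in Corollary~\ref{cor:main} and the $(k+2)\to (k+1)$ hypothesis of Theorem~\ref{thm_compressionLB}, and the explicit forcing construction for $2^{\aleph_0}>\aleph_\omega$.
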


In terms of the Continuum Hypothesis in set theory, Theorems \ref{thm_compressionUB} and  \ref{thm_compressionLB} imply that $\Fcal^{\reals}_{fin}$ has a $2$-size monotone compression scheme iff $|\reals|=\aleph_1$. Hence we obtain a combinatorial statement that is equivalent to the Continuum Hypothesis. Similar such statements have been known previously, for example, the results of Sierpi\'{n}sky \cite{Sierp} (see also \cite{Erdos}) on decompositions  of the Euclidean plane and $\reals^3$, or the so-called Axioms of Symmetry of Freiling \cite{Freiling}. Let us remark without giving details that there is an intimate relationship between our results and Freiling's axioms. 

\subsection*{Imperfect reconstruction}

Consider the following natural generalization of the superset reconstruction game. 
{\em Alice is given $S\subseteq X$ of size $p$ and sends $S'\subseteq S$ of size $q<p$ to Bob. 
Bob needs to reconstruct a finite set $\eta(S')$ with $|\eta(S')\cap S|> q$.}
In other words, instead of finding a superset of $S$, Bob should find a set with at least one extra element from $S$. 

We now briefly discuss this ``imperfect'' reconstruction. 
For a set $X$, let $X^{(k)}$ denote the set of $k$-element subsets of $X$
and $X^{(<\omega)}$ denote the finite subsets of $X$.

\begin{definition}[$(p \to q \to r)$ Property]
\label{def:imperfect} Let $X$ be a set and let $p,q,r$ be integers $p\geq r\geq q>0$. We say that $X$ has the {\em $(p\to q\to r)$ property}, if there exist $\sigma: X^{(p)}\to X^{(q)}$ and $\eta:X^{(q)}\to X^{(<\omega)}$ such that for every $S\in X^{(p)}$,  
$$\sigma(S)\subseteq S \ \ \text{and} \ \ |\eta(\sigma(S))\cap S|\geq r.$$ The pair $(\sigma,\eta)$ is called a \emph{$(p\to q\to r)$-compression of $X$}.
\end{definition}

In this framework, Theorems \ref{thm_compressionUB} and \ref{thm_compressionLB} assert that
\begin{enumerate}
\item If $|X|\leq \aleph_{q-1}$ then $X$ has the $(p\to q\to p)$ property for every $p\geq q$.
\item If $|X|>\aleph_{q-1}$ then $X$ does not have the $((q+1)\to q \to (q+1))$ property.
\end{enumerate}
We can augment the picture by one more result:

\begin{theorem}\label{thm:imperfect}  If $X$ has $(p\to q \to (q+1))$ property then $X$ has cardinality $\leq \aleph_{p-2}$. 
\end{theorem}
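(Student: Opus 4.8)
The plan is to prove the contrapositive: if $|X| \geq \aleph_{p-1}$ then $X$ does not have the $(p \to q \to (q+1))$ property. The key is to iterate a size-reduction lemma, analogous to Lemma~\ref{lem:DecreaseSize} but adapted to the imperfect setting. Specifically, I would establish the following: if $|X| > |X'|$ for some infinite $X' \subseteq X$, and $X$ has the $(p \to q \to (q+1))$ property, then $X'$ has the $((p-1) \to q \to (q+1))$ property. Granted this, starting from a set of cardinality $\aleph_{p-1}$ and choosing a strictly decreasing chain of cardinalities $\aleph_{p-1} > \aleph_{p-2} > \cdots > \aleph_0$, we peel off one unit of $p$ at each step, arriving after $p-1$ steps at an infinite set with the $(1 \to q \to (q+1))$ property; but $q \geq 1$ forces $q + 1 \geq 2 > 1$, and a singleton $\eta(\sigma(S))\cap S$ can have size at most $1$, a contradiction. (One must check $p - 1 \geq 1$ holds throughout, which it does as long as we stop the chain appropriately; the case analysis on small $p$ is routine.)

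The heart of the argument is the reduction lemma. Suppose $(\sigma, \eta)$ witnesses the $(p \to q \to (q+1))$ property on $X$, and let $X' \subsetneq X$ with $|X'| < |X|$ and $X'$ infinite. The set $Y = \bigcup_{T \in (X')^{(q)}} \eta(T)$ has cardinality $|X'|$ (a union of $|X'|$ many finite sets), so there exists $x \in X \setminus Y$; moreover $Y \supseteq X'$ since by definition $\eta$ maps into finite subsets of $X$ and — wait, we do not actually know $Y \supseteq X'$, so I would instead enlarge $Y$ to $Y' = Y \cup X'$, still of size $|X'|$, and pick $x \in X \setminus Y'$. Now for any $S_0 \in (X')^{(p-1)}$, consider $S = S_0 \cup \{x\} \in X^{(p)}$. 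Apply $\sigma$: I claim $x \in \sigma(S)$. Indeed, if $x \notin \sigma(S)$, then $\sigma(S) \in (X')^{(q)}$, so $\eta(\sigma(S)) \subseteq Y$, which is disjoint from $\{x\}$; hence $\eta(\sigma(S)) \cap S = \eta(\sigma(S)) \cap S_0 \subseteq Y' \cap S_0$, and since $S_0 \subseteq X' \subseteq Y'$ this is just $\eta(\sigma(S)) \cap S_0$ — this does not immediately give a contradiction on its own, so the argument needs care: the point of removing $x$ is that then the entire imperfect-reconstruction budget of $q+1$ elements must come from $S_0$, which has only $p - 1$ elements, and that is fine. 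So requiring $x \in \sigma(S)$ is \emph{not} automatic; instead I should argue as in Lemma~\ref{lem:DecreaseSize}: define $\sigma'(S_0) = \sigma(S_0 \cup \{x\}) \setminus \{x\}$ and $\eta'(T) = \bigl(\eta(T) \cup \eta(T \cup \{x\})\bigr) \cap X'$, taking the union over whether or not $x$ was used, so that $\eta'$ always sees whichever image $\sigma$ actually produced, intersected back into $X'$. Then $|\eta'(\sigma'(S_0)) \cap S_0| \geq |\eta(\sigma(S)) \cap S| - 1 \geq (q+1) - 1 = q$ — but I need $q+1$, not $q$, on $X'$.

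This last discrepancy is the main obstacle, and it is exactly why the theorem only yields $\aleph_{p-2}$ rather than $\aleph_{p-1}$: to recover a \emph{$(q+1)$}-reconstruction on $X'$ rather than a $q$-reconstruction, one needs a sharper argument, and I would handle it by instead reducing the \emph{target} $r$ together with $p$ is not possible since $r = q+1$ is already minimal; the correct route is to show that when $x \notin Y'$ is removed, forcing $x \in \sigma(S)$ genuinely must happen for a careful choice — one picks $x$ outside the union of \emph{all} possible reconstruction images, and additionally uses that if $x$ were absent from $\sigma(S)$ the reconstruction $\eta(\sigma(S))$ would live in $Y'$ and thus $|\eta(\sigma(S)) \cap S| \leq |\eta(\sigma(S)) \cap S_0|$ which could still be large, so this forcing fails — meaning the honest statement to prove is that $X'$ inherits the $((p-1)\to q \to (q+1))$ property, and the count $|\eta'(\sigma'(S_0)) \cap S_0| \geq q$ must be boosted to $q+1$ by observing that $x \in \sigma(S)$ occupies one "slot" and removing it frees capacity: since $\sigma'(S_0) = \sigma(S) \setminus \{x\}$ has size $\leq q - 1$ in that case, one re-runs the reduction against $((p-1) \to (q-1) \to q)$-type bookkeeping. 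Because this balancing of $p$ against $q$ is delicate, I expect the cleanest writeup to carefully track both parameters in a single inductive statement of the form "$|X| \geq \aleph_{p-1}$ implies no $(p \to q \to (q+1))$-compression for any $q \leq p-1$," proving the size-reduction so as to decrease $p$ by one while keeping $r - q = 1$ fixed, and to locate the one-level loss (giving $\aleph_{p-2}$) precisely at the base case where $p - q$ has been driven down to $1$.
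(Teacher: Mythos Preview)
Your approach has a genuine gap that you yourself identify but do not resolve. The iterative reduction you attempt --- passing from $X$ to a smaller $X'$ and from $(p \to q \to (q+1))$ to $((p-1) \to q \to (q+1))$ --- does not go through: when the distinguished element $x$ lands in $\sigma(S)$, the best you recover on $X'$ is a reconstruction guarantee of $q$ rather than $q+1$, and the property $((p-1) \to q \to q)$ is trivially satisfied by every set (take $\eta = \mathrm{id}$). Your closing paragraph gestures at simultaneously adjusting $q$, but does not produce a working statement; in fact no uniform reduction of this shape seems to preserve the gap $r - q = 1$ through the case split on whether $x \in \sigma(S)$, because the two cases naturally yield compressions of different sizes ($q$ versus $q-1$) with different recovery guarantees.

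The paper's proof takes a completely different and much shorter route. It does not pass to a smaller subset at all. Instead, it directly builds a \emph{perfect} $(p \to (p-1) \to p)$-compression on $X$ itself from the given $(p \to q \to (q+1))$-compression, and then invokes Theorem~\ref{thm_compressionLB} once. The key observation is that since $|\eta(\sigma(S)) \cap S| \geq q+1 > |\sigma(S)|$, there exists some $\alpha_S \in (\eta(\sigma(S)) \cap S) \setminus \sigma(S)$. One sets $\sigma'(S) = S \setminus \{\alpha_S\}$ (a $(p-1)$-set that still contains $\sigma(S)$) and $\eta'(U) = U \cup \bigcup_{T \in U^{(q)}} \eta(T)$. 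Then $\eta'(\sigma'(S)) \supseteq S$: the $p-1$ elements of $\sigma'(S)$ are recovered by the $U$ term, and the one missing element $\alpha_S$ is recovered because $\sigma(S) \in \sigma'(S)^{(q)}$ and $\alpha_S \in \eta(\sigma(S))$. The insight you are missing is that the ``extra'' reconstructed point --- the one in $\eta(\sigma(S)) \cap S$ beyond $\sigma(S)$ --- should be \emph{removed from $S$} rather than used to force structure on the compression, collapsing the imperfect scheme into a perfect one with compression size $p-1$ in a single step.
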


\begin{proof} Let $X$ be as in the assumption. It is enough to show that $X$ has $(p\to (p-1)\to p)$ property.  For then Theorem \ref{thm_compressionLB} implies that $|X|\leq \aleph_{p-2}$.  

Assume that $(\sigma,\eta)$ is a $(p\to q\to (q+1))$-compression of $X$. Given $S\in X^{(p)}$, pick some $\alpha_S\in S$ such that 
$$\alpha_S\in \eta(\sigma(S))\setminus \sigma(S).$$ 
Let \[\sigma'(S):= S\setminus \{\alpha_S\}\,.\] 
Then $|\sigma'(S)|= p-1$. Furthermore,
$\alpha_S\in \eta(T)$ for some $q$-element subset $T$ of $\sigma'(S)$ (namely, the set $T= \sigma(S)$). Define $\eta': X^{(p-1)}\to X^{(<\omega)}$ by
\[\eta'(U):= U\cup \bigcup_{T\in U^{(q)}}\eta(T) . \]
Hence, $(\sigma',\eta')$ is a $(p\to (p-1)\to p)$-compression of $X$. 
\qed
\end{proof}

Note that the above results 
do not completely characterize the $(p\to q\to r)$ property in terms of the cardinality of $X$. For example, if $|X|=\aleph_1$ then $X$ has the  $(4\to 2\to 3)$ property, and it does not have the property  if $|X|=\aleph_3$. However, we do not know what happens in cardinality $\aleph_2$. 

\section{On the existence of a combinatorial dimension for EMX learning} 
 
\label{sec: dimension}

As mentioned above,
a fundamental result of statistical learning theory is the characterization of PAC learnability in terms of the VC-dimension of a class \cite{vapnik2015uniform,blumer1989learnability}. 
Variants of the VC-dimension similarly characterize other natural learning setups.
The Natarajan and Graph dimensions 
characterize multi-class classificiation
when the number of classes is finite.
When the number of classes is infinite, there is no known
analogous notion of dimension.
For learning real valued functions, the so called \emph{fat-shattering dimension} 
provides a characterization of the sample complexity~(\cite{kearns_schepire,AlonBCH97}).

It is worth noting that the aforementioned dimensions
also provide bounds on the sample complexity of the class.
For example, in binary classification the PAC-learning
sample complexity is $\Theta\bigl(\frac{d+\log(1/\delta)}{\eps}\bigr)$ where $d$
is the VC-dimension of the class~\cite{blumer1989learnability,Hanneke16}.
The Natarajan and Graph dimensions provide upper and lower bounds on the sample complexity of multi-class classification that are tight up to a logarithmic factor in the number of classes as well as $\log(1/\epsilon)$ (see e.g. \cite{Bendavid:1995aa}).

All of those notions of dimension are functions $D$ that map a class of functions $\Fcal$ to $\naturals \cup \{\infty\}$ and satisfy the following requirements:

\begin{description}
\item[Characterizes learnability:] A class $\Fcal$ is PAC learnable in the model if and only if $D(\Fcal)$ is finite.

\item[Of finite character:] $D(\Fcal)$ has a ``finite" character in the following sense: for every $d \in \naturals$ and a class $\Fcal$, the statement $D(\Fcal) \geq d$ can be demonstrated by a finite set of domain points and a finite set of members of $\Fcal$ (for real-valued function learning one also needs a finite set of rational numbers).
\end{description}

When it comes to EMX learnability, our results above imply that both the size of monotone compression for $\Fcal$ and the sample size needed for weak (say $\epsilon = 1/3, \delta=1/3)$ learnability of $\Fcal$ satisfy the first requirement for classes that are union bounded (and in particular, classes closed under finite unions), and also provide quantitative bounds on the sample complexity.

However, our results relating EMX learnability to the value of the continuum imply that no such notion of dimension can both characterize EMX learnability and satisfy the 
finite character requirement.

To formalize the notion of finite character, we use the first order language of set theory. Namely, first oder logic with the binary relation of membership $\in$.
{
Let ${\cal X}, {\cal Y}$ be variables.
A {\em property} is a formula $A({\cal X},{\cal Y})$ with the two free variables ${\cal X},{\cal Y}$.
A \emph{bounded formula} {$\phi$} is a first order formula in which all the 
quantifiers are of the form $\exists x \in {\cal X}, \forall x \in {\cal X}$
or $\exists y \in {\cal Y}, \forall y \in {\cal Y}$.}

\begin{definition}[Finite Character] 
A property $A({\cal X,Y})$ is 
a \emph{finite character property} 
if there exists a bounded formula $\phi({\cal X},{\cal Y})$
so that ZFC proves that $A$ and $\phi$ are equivalent.
%
\end{definition}

The intuition is that a finite character property can be checked by probing only into elements of  ${\cal X},{\cal Y}$, using the language of set theory.
We think of ${\cal X}$ as corresponding to the domain $X$
and ${\cal Y}$ as corresponding to the class $\Fcal$.
Given a model of set theory, the truth value
of a finite character property can be determined
for every specific choice of $X$ and $\Fcal \subseteq 2^X$.

%
{ The next straightforward claim describes the behavior
of finite character properties in different models.

\newcommand{\X}{\cal X}
\newcommand{\Y}{\cal Y}

\begin{claim} 
Let $A(\X,\Y)$ be a finite character property.
Let $M_0, M_1$ be models of set theory
and $M_1$ be a submodel of $M_0$.
If $X$ and $\Fcal \subseteq 2^X$ are the same in both models\footnote{That is, both the domain set $X$ and the class $\Fcal$ are members of both models and contain the same sets in both models.},  
then $A(X, \Fcal)$ holds in $M_0$ if and only if it holds in $M_1$.
\end{claim}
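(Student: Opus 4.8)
The plan is to prove the claim by induction on the structure of the bounded formula $\phi$ that ZFC proves equivalent to $A$, exploiting the elementary fact that bounded quantifiers are absolute between a model and any submodel in which the relevant sets ($X$ and $\Fcal$) are unchanged. First I would recall that since $A(\X,\Y)$ is a finite character property, there is a bounded formula $\phi(\X,\Y)$ such that $\mathrm{ZFC}\vdash \forall \X\forall\Y\,\bigl(A(\X,\Y)\leftrightarrow\phi(\X,\Y)\bigr)$. Because both $M_0$ and $M_1$ are models of ZFC, it suffices to show that $\phi(X,\Fcal)$ holds in $M_0$ if and only if it holds in $M_1$; the equivalence with $A$ then transfers the conclusion.

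The heart of the argument is an absoluteness lemma: if $M_1\subseteq M_0$ are transitive models of (enough of) ZFC and $a,b$ are sets lying in $M_1$ with the same extension in both models, then for every bounded formula $\psi(\X,\Y)$, $M_0\models\psi(a,b)$ iff $M_1\models\psi(a,b)$. I would prove this by induction on the construction of $\psi$. Atomic formulas ($u\in v$, $u=v$) are absolute because $M_1$ is a submodel (in particular $\in$ is interpreted the same way, and equality is genuine equality). The Boolean connectives are immediate. The only interesting case is a bounded quantifier, say $\exists x\in \X\,\psi'(x,\X,\Y)$ evaluated at $(a,b)$: a witness $x$ must be an element of $a$; since $a$ has the same elements in $M_0$ and in $M_1$ (this is exactly the hypothesis that $X$, hence also $\Fcal$, ``contains the same sets in both models''), the set of candidate witnesses is literally the same in both models, and by the induction hypothesis $\psi'$ holds of each such witness in $M_0$ iff it holds in $M_1$. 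Hence the existential (and dually the universal) bounded quantifier is absolute. Applying this with $\psi=\phi$, $a=X$, $b=\Fcal$ gives $M_0\models\phi(X,\Fcal)$ iff $M_1\models\phi(X,\Fcal)$, and combining with the ZFC-provable equivalence $A\leftrightarrow\phi$ (valid in both models) yields $M_0\models A(X,\Fcal)$ iff $M_1\models A(X,\Fcal)$, as claimed.

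I expect the main subtlety — rather than a genuine obstacle — to be making precise the hypothesis ``$X$ and $\Fcal\subseteq 2^X$ are the same in both models.'' What is actually needed is that $X$ and $\Fcal$ are elements of both models and that their $\in$-extensions (and the extensions of their elements, down to the finite depth probed by $\phi$) agree; this is guaranteed by the footnote's stipulation together with transitivity of the submodel relation in the intended set-theoretic sense. One should also note that only a bounded fragment of ZFC is used in the absoluteness induction, so ``model of set theory'' can be read in whatever standard sense the paper intends (e.g.\ transitive class models, or models of ZF obtained by forcing, for which $V\subseteq V[G]$ is the relevant inclusion). Since everything reduces to the routine L\'evy absoluteness of bounded ($\Delta_0$) formulas, I would state the absoluteness lemma, give the one-line quantifier case, and then immediately conclude; no computation is required.
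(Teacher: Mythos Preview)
Your proposal is correct and is precisely the standard L\'evy absoluteness argument the paper has in mind; the paper does not actually spell out a proof, merely calling the claim ``straightforward,'' so your induction on the complexity of the bounded formula $\phi$ is an appropriate way to fill in what the authors leave implicit. The one point worth tightening is your remark about needing agreement ``down to the finite depth probed by $\phi$'': in fact, since the paper's bounded quantifiers range only over $\X$ and $\Y$, every parameter appearing in an atomic subformula is either $X$, $\Fcal$, or one of their elements, all of which lie in $M_1$ by hypothesis, and atomic absoluteness then follows directly from $M_1$ being a submodel---no further depth assumption is required.
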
}

Observe that for every integer $d$ 
the property 
``VC-dimension$(\Fcal) \geq d$'' is a finite character property.
It requires only a very weak formula; it can be expressed using only existential quantification into $X$ and $\Fcal$, since 
{its truth value is determined by a finite number of elements of $X$ and $\Fcal$.
Recall that PAC learnability is characterized by VC dimension
(we apply our measurability assumption to 
the definition of PAC learnability as well):}

\begin{theorem}
{For every integer $d$,
there exists integers $m,M$ so that 
for every set $X$ and $\Fcal \subseteq 2^X$
if VC-dimension$(\Fcal)$ is at most $d$ then the sample complexity
of $(1/3,1/3)$-PAC learning $\Fcal$ is at most $M$ and if
VC-dimension$(\Fcal)$ is more than $d$ then the sample complexity of $(1/3,1/3)$-PAC
learning $\Fcal$ is at least $m$.
The integers $m,M$ tend to $\infty$ as $d$ tends to $\infty$.}
\end{theorem}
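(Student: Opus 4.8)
The plan is to prove this as a quantitative, uniform-over-$(X,\Fcal)$ restatement of the classical Blumer–Ehrenfeucht–Haussler–Warmuth / Vapnik–Chervonenkis bounds, being careful that the constants $m,M$ depend only on $d$ and not on the particular domain $X$ or class $\Fcal$. The key point to stress is that both the upper and lower bounds in classical PAC theory are already uniform in this sense; the only subtlety introduced by our setting is the standing measurability assumption (all distributions countably supported over the power-set $\sigma$-algebra), which must be checked not to break either direction.

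**First, the upper bound $M$.** Suppose $\mathrm{VCdim}(\Fcal)\le d$. Apply the standard uniform convergence / sample compression argument: by the Sauer–Shelah lemma the growth function of $\Fcal$ is at most $(em/d)^d$, and the classical agnostic (or realizable) generalization bound gives that ERM over $\Fcal$ is a $(1/3,1/3)$-PAC learner with sample size $M = M(d) = O(d)$ (a concrete constant suffices, e.g.\ from \cite{blumer1989learnability,vapnik2015uniform,shai_shai_book}). Because the countable-support assumption only restricts the family of adversary distributions, any bound valid for all distributions on the power-set $\sigma$-algebra is a fortiori valid here; and the measurability of the relevant events (which are determined by the finite sample and finitely many members of $\Fcal$) is automatic since every subset of $X$ is measurable. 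Crucially $M(d)$ is a function of $d$ alone, so it is simultaneously valid for every $X$ and every $\Fcal\subseteq 2^X$ of VC-dimension at most $d$; and $M(d)\to\infty$ trivially can be arranged (or we simply note the bound is nondecreasing in $d$, and sharpened by the matching lower bound below).

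**Next, the lower bound $m$.** Suppose $\mathrm{VCdim}(\Fcal) > d$, so there is a shattered set $Y\subseteq X$ with $\lvert Y\rvert = d+1$. Restrict attention to distributions supported on $Y$ — these are finitely, hence countably, supported, so they are admissible under our assumption. On a shattered set of size $d+1$, $\Fcal$ realizes every labelling, so the learning problem restricted to $Y$ contains the problem of PAC-learning the class of all subsets of a $(d+1)$-point set, for which the classical no-free-lunch / lower-bound argument (see e.g.\ \cite{shai_shai_book} Ch.\ 5, or \cite{blumer1989learnability}) shows that $(1/3,1/3)$-PAC learning requires $m = m(d) = \Omega(d)$ samples, with $m(d)\to\infty$. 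Since the hard distributions are built only from the shattered set, this lower bound is again uniform in $X$ and $\Fcal$ — it depends only on $d$.

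**The main obstacle** is not any deep mathematics but bookkeeping: making fully explicit that neither classical bound secretly uses a richer $\sigma$-algebra or a continuity/topological hypothesis on $X$, and that the constants can be taken to depend on $d$ only. The upper bound is the delicate side here, since one must confirm that the uniform-convergence event and the ERM map are well-defined measurable objects in the abstract (countably supported, full power-set $\sigma$-algebra) setting — but this holds because all the relevant events are finite Boolean combinations of the atoms $\{x\in h\}$, which lie in the $\sigma$-algebra by assumption, and ERM can be realized as a deterministic function of the (finite) sample together with the (at most exponentially many) behaviours of $\Fcal$ on that sample. Once this is in place, the two displayed inequalities follow by quoting the classical bounds, and the final sentence ($m,M\to\infty$ as $d\to\infty$) is immediate from the lower bound $m(d)=\Omega(d)$.
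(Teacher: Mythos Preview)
The paper does not actually prove this theorem: it is stated there as a ``recall'' of the classical VC characterization of PAC learnability, with implicit reference to \cite{vapnik2015uniform,blumer1989learnability} cited earlier, and the only addition is the parenthetical remark that the paper's measurability assumption is in force. Your sketch is correct and is exactly the standard Blumer--Ehrenfeucht--Haussler--Warmuth / no-free-lunch argument that the paper is invoking by citation; in particular your explicit check that the countable-support restriction neither invalidates the upper bound (it only shrinks the adversary's set of distributions) nor the lower bound (the hard distributions are finitely supported on a shattered set) fills in precisely the detail the paper leaves to the reader.

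One small clarification worth making explicit in your write-up: the assertion $M(d)\to\infty$ is not something you need to ``arrange''---it is forced, because among the classes with $\mathrm{VCdim}\le d$ there are classes with $\mathrm{VCdim}=d$ (hence $>d-1$), and your own lower bound $m(d-1)=\Omega(d)$ then shows any valid $M(d)$ must satisfy $M(d)\ge m(d-1)\to\infty$.
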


{The theorem and the finite character of the statements ``VC-dimension$(\Fcal) \geq d$'' imply that PAC learnability cannot change 
between two models satisfying the assumption of the above Claim; 
loosely speaking,} PAC learnability does not depend on the specific model of set theory
that is chosen (under our measurability assumption).

On the other hand, we have seen that 
{EMX learnability heavily depends on the cardinality of the continuum. }
As a corollary, we obtain the following: 

\begin{theorem}  {There is some constant 
$c>0$ so that the following holds.
Assuming ZFC is consistent, there is no finite character property $A$
so that for some integers $m,M > c$ 
for every set $X$ and $\Fcal \subseteq 2^X$
if $A(X,\Fcal)$ is true then the sample complexity
of $(1/3,1/3)$-EMX learning $\Fcal$ is at most $M$
and if it is false then it is at least $m$.}
\end{theorem}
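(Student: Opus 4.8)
The plan is to derive this from Corollary~\ref{cor:EMX independence} together with the Claim on the model-invariance of finite character properties, by a contradiction argument. Suppose, for a constant $c$ to be chosen, that such a finite character property $A$ exists; let $\phi({\cal X},{\cal Y})$ be the bounded formula equivalent to it under ZFC. Take $X=\reals$ and $\Fcal=\Fcal^{\reals}_{fin}$. By the independence result, there are models $M_0,M_1$ of ZFC in which $\Fcal^{\reals}_{fin}$ is EMX-learnable with very small sample complexity (e.g.\ a model of $2^{\aleph_0}=\aleph_1$, where a $2$-size monotone compression exists and the sample complexity is bounded by an absolute constant via Theorem~\ref{thm:compr_imply_EMX}) and, respectively, is not even $(1/3,1/3)$-weakly EMX-learnable (e.g.\ a model of $2^{\aleph_0}\geq\aleph_\omega$, where by Theorem~\ref{thm_compressionLB} no finite monotone compression exists and hence by Corollary~\ref{cor:main} the class is not weakly learnable, so its $(1/3,1/3)$-EMX sample complexity is $\infty$). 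Choose $c$ to be the absolute constant bounding the sample complexity in the first kind of model, so that ``at most $M$'' with $M>c$ is satisfied there and ``at least $m$'' with $m>c$ forces genuine unlearnability in the other.

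The next step is to arrange that the two models share the same $X$ and $\Fcal$ in the sense required by the Claim, and that one is a submodel of the other. Here I would invoke the standard set-theoretic machinery behind Corollary~\ref{cor:EMX independence}: starting from a ground model, one passes to a forcing extension (Cohen forcing, or an Easton-type extension as referenced via \cite{Easton}) that changes the value of $2^{\aleph_0}$ while adding no new reals and no new subsets of $\reals$ that are countable — in particular $\reals$ and the collection $\Fcal^{\reals}_{fin}$ of its finite subsets are literally the same objects in the ground model and the extension. Thus $M_1\subseteq M_0$ (or vice versa) and the hypothesis of the Claim is met. Applying the Claim, $A(\reals,\Fcal^{\reals}_{fin})$ has the same truth value in both models. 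If it is true in both, then the EMX sample complexity is at most $M$ in both — contradicting that it is $\infty$ in the unlearnable model; if it is false in both, the sample complexity is at least $m>c$ in both — contradicting that it is an absolute constant $\le c$ in the learnable model. Either way we reach a contradiction, so no such $A$ exists.

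The main obstacle — and the point deserving the most care — is the model-theoretic bookkeeping in the middle step: one must make sure the two models of ZFC genuinely satisfy the submodel-and-same-$(X,\Fcal)$ hypothesis of the Claim, rather than merely ``disagreeing about the continuum'' in the abstract. This is why it is cleanest to cite concrete forcing constructions that provably add no new reals or new countable sets of reals (so that both $\reals$ and $\Fcal^{\reals}_{fin}$, together with the relation $\in$ restricted to them, are absolute between the models), and then note that the EMX sample complexity of $\Fcal^{\reals}_{fin}$ — being governed, via Corollary~\ref{cor:main}, Theorems~\ref{thm_compressionUB} and~\ref{thm_compressionLB}, purely by the cardinality of $\reals$ — does change between them. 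A secondary, minor point is to pin down $c$: it suffices to let $c$ be the value of the sample complexity bound from Theorem~\ref{thm:compr_imply_EMX} (or Theorem~\ref{thm:leaveoneout}) applied with compression size $k=2$ and $\eps=\delta=1/3$, which is an explicit absolute constant; everything else is routine.
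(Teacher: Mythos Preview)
Your overall approach mirrors the paper's exactly: pick two models of ZFC that share the same $\reals$ and $\Fcal^{\reals}_{fin}$ but disagree about the cardinality of the continuum, apply the Claim to conclude that any finite character property has the same truth value in both, and derive a contradiction from the fact that the EMX sample complexity of $\Fcal^{\reals}_{fin}$ differs between them. The choice of $c$ and the case split at the end are also as in the paper.

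There is, however, a genuine technical error in the step you yourself flag as the main obstacle. Cohen forcing to enlarge $2^{\aleph_0}$ \emph{does} add new reals---that is its entire mechanism---and the Easton-style extensions you mention behave similarly at $\aleph_0$. So these cannot produce models with the same $\reals$ but different values of $2^{\aleph_0}$. The paper (citing Chapter~15 of \cite{Jech}) runs the construction in the other direction: start from a ground model $M_1$ with $2^{\aleph_0}>\aleph_\omega$ and pass to an extension $M_0\supseteq M_1$ satisfying CH via a countably closed forcing (e.g.\ a Levy collapse of $(2^{\aleph_0})^{M_1}$ to $\aleph_1$). Countably closed forcing adds no new $\omega$-sequences of ground-model sets, hence no new reals and no new finite sets of reals, so $\reals$ and $\Fcal^{\reals}_{fin}$ are literally the same in $M_1$ and $M_0$ as required by the Claim. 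Note that the direction of inclusion is then forced: the CH model is the \emph{extension}, matching the paper's stipulation that $M_1$ is a submodel of $M_0$ with $M_0\models 2^{\aleph_0}=\aleph_1$. Once you replace ``Cohen/Easton, adding no new reals'' with this correct construction, your argument is complete and identical to the paper's.
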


\begin{proof} Let $M_0$, $M_1$ be models of set theory such that:
\begin{enumerate}
\item $M_1$ is  a submodel of $M_0$.

\item The set of natural numbers and the set of reals (and therefore also the set of all finite subsets of real numbers) are the same in both models.
\item $M_0 \models (2^{\aleph_0}=\aleph_1)$. That is, $M_0$ satisfies the continuum hypothesis.
\item $M_1 \models (2^{\aleph_0}> \aleph_{\omega})$.

\end{enumerate}
Such models are known to exist (see, e.g., Chapter 15 of \cite{Jech}).

Consider the set $X$ of real numbers and the family $\Fcal^{\reals}_{fin}$. 
{By the above Claim}, 
 the truth value of every finite character property 
 is the same in both models.  
However, Corollary~\ref{cor:main} and Theorem~\ref{thm_compressionLB} implies that in $M_0$ the {class $\Fcal^{\reals}_{fin}$ is EMX learnable with 
a constant ($c-1$) number of samples, while in $M_1$
no finite number of samples suffices for learning it.}
\qed
\end{proof}

\section{Future research}
\label{sec:futRes}
\subsubsection{A stronger equivalence between compression and learnability?}
It would be interesting to determine whether monotone compression and EMX-learnability
are equivalent also without assuming that $\Fcal$ is closed under finite unions: 
\begin{question}
Does learnability imply monotone compression without assuming that $\Fcal$ is
union bounded? 
\end{question}
As mentioned in Section~\ref{sec:Intro}, this is tightly related to the open problem of existence of \emph{proper} L-W compression schemes:
does it follow from having a finite VC-dimension?


\subsubsection{Variants of the finite superset reconstruction game.}
It would also be interesting to study  
variants of the finite superset reconstruction game (Definition \ref{def: AB}). One such variant is given in Definition \ref{def:imperfect}. In this setting, Theorems~\ref{thm_compressionUB} -- \ref{thm:imperfect} provide some connections between the $(p\to q\to r)$ property and the cardinality of $X$. 
It would 
be nice to exactly characterize the $(p\to q\to r)$ property in terms of $|X|$ for general triples $p,q,r$. 

Other variants can be obtained, for example, by allowing Bob to reconstruct a countable subset of $X$ (or a subset of other prescribed cardinality). This version is more closely related to the aforementioned Axioms of Symmetry of  Freiling \cite{Freiling}. Similarly, Alice could be allowed to compress to an infinite subset of a given cardinality. 



\paragraph{Acknowledgements.} We thank David Chodounsk\'{y}, Steve Hanneke, Radek Honz\'{\i}k, 
and Roi Livni  for useful discussions. Part of this research was carried out while the first and third authors are staying at the Berkeley Simons Institute at the Foundations of Machine Learning program. We thank the Simons institute for that support.

\bibliographystyle{plain}
\bibliography{refs}

\begin{thebibliography}{10}

\bibitem{AlonBCH97}
Noga Alon, Shai Ben{-}David, Nicol{\`{o}} Cesa{-}Bianchi, and David Haussler.
\newblock Scale-sensitive dimensions, uniform convergence, and learnability.
\newblock {\em J. {ACM}}, 44(4):615--631, 1997.

\bibitem{Bendavid:1995aa}
Shai Ben-David, Nicolo Cesa-Bianchi, David Haussler, and Philip~M Long.
\newblock Characterizations of learnability for classes of $\{$0,...,
  n$\}$-valued functions.
\newblock {\em Journal of Computer and System Sciences}, 50(1):74--86, 1995.

\bibitem{shalev_alt/2011}
Shalev Ben-David and Shai Ben-david.
\newblock Learning a classifier when the labeling is known.
\newblock In {\em Proceedings of Algorithmic Learning Theory - 22nd
  International Conference, {ALT} 2011}, volume 6925 of {\em Lecture Notes in
  Computer Science}. Springer, 2011.

\bibitem{blumer1989learnability}
Anselm Blumer, Andrzej Ehrenfeucht, David Haussler, and Manfred~K Warmuth.
\newblock Learnability and the vapnik-chervonenkis dimension.
\newblock {\em Journal of the ACM (JACM)}, 36(4):929--965, 1989.

\bibitem{cohen1963independence}
Paul~J. Cohen.
\newblock The independence of the continuum hypothesis.
\newblock {\em Proceedings of the National Academy of Sciences},
  50(6):1143--1148, 1963.

\bibitem{cohen1964independence}
Paul~J. Cohen.
\newblock The independence of the continuum hypothesis, {II}.
\newblock {\em Proceedings of the National Academy of Sciences},
  51(1):105--110, 1964.

\bibitem{daniely2011multiclass}
Amit Daniely, Sivan Sabato, Shai Ben-David, and Shai Shalev-Shwartz.
\newblock Multiclass learnability and the {ERM} principle.
\newblock In {\em Proceedings of the 24th Annual Conference on Learning
  Theory}, pages 207--232, 2011.

\bibitem{daniely15multiclass}
Amit Daniely, Sivan Sabato, Shai Ben{-}David, and Shai Shalev{-}Shwartz.
\newblock Multiclass learnability and the {ERM} principle.
\newblock {\em Journal of Machine Learning Research}, 16:2377--2404, 2015.

\bibitem{daniely2012multiclass}
Amit Daniely, Sivan Sabato, and Shai~S Shwartz.
\newblock Multiclass learning approaches: A theoretical comparison with
  implications.
\newblock In {\em Advances in Neural Information Processing Systems}, pages
  485--493, 2012.

\bibitem{daniely14optimal}
Amit Daniely and Shai Shalev{-}Shwartz.
\newblock Optimal learners for multiclass problems.
\newblock In Maria{-}Florina Balcan, Vitaly Feldman, and Csaba
  Szepesv{\'{a}}ri, editors, {\em Proceedings of The 27th Conference on
  Learning Theory, {COLT} 2014, Barcelona, Spain, June 13-15, 2014}, volume~35
  of {\em {JMLR} Workshop and Conference Proceedings}, pages 287--316.
  JMLR.org, 2014.

\bibitem{david2016supervised}
Ofir David, Shay Moran, and Amir Yehudayoff.
\newblock Supervised learning through the lens of compression.
\newblock In {\em Advances in Neural Information Processing Systems}, pages
  2784--2792, 2016.

\bibitem{Easton}
W~Easton.
\newblock Powers of regular cardinals.
\newblock {\em Ann. Math. Logic}, 1(2):139--178, 1970.

\bibitem{Erdos}
P.~Erd\"os, S.~Jackson, and R.D. Mauldin.
\newblock On partitions of lines and space.
\newblock {\em Fund. Math.}, 145:101--119, 1994.

\bibitem{Freiling}
Chris Freiling.
\newblock Axioms of symmetry: throwing darts at the real number line.
\newblock {\em Journal of Symbolic Logic}, 51(1):190--200, 1986.

\bibitem{godel1940consistency}
Kurt Godel.
\newblock {\em The Consistency of the continuum hypothesis}.
\newblock Princeton University Press, 1940.

\bibitem{Hanneke16}
Steve Hanneke.
\newblock The optimal sample complexity of {PAC} learning.
\newblock {\em Journal of Machine Learning Research}, 17:38:1--38:15, 2016.

\bibitem{Jech}
T.~J. Jech.
\newblock {\em Set Theory: Third Millenium Edition, Revised and Expanded}.
\newblock Springer-Verlag, Berlin, 2003.

\bibitem{kearns_schepire}
Michael~J. Kearns and Robert~E. Schapire.
\newblock Efficient distribution-free learning of probabilistic concepts.
\newblock {\em J. Comput. Syst. Sci.}, 48(3):464--497, 1994.

\bibitem{Kunen}
Kenneth Kunen.
\newblock {\em Set Theory: An Introduction to Independence Proofs}.
\newblock Elsevier, 1980.

\bibitem{littlestone1986relating}
Nick Littlestone and Manfred Warmuth.
\newblock Relating data compression and learnability.
\newblock Technical report, Technical report, University of California, Santa
  Cruz, 1986.

\bibitem{moran2016sample}
Shay Moran and Amir Yehudayoff.
\newblock Sample compression schemes for vc classes.
\newblock {\em Journal of the ACM (JACM)}, 63(3):21, 2016.

\bibitem{shai_shai_book}
Shai Shalev-Shwartz and Shai Ben-David.
\newblock {\em Understanding Machine Learning: From Theory to Algorithms}.
\newblock Cambridge University Press, New York, NY, USA, 2014.

\bibitem{Sierp}
Waclaw Sierpi\'{n}ski.
\newblock Sur un th\'{e}or\`{e}me \`{e}quivalent \`{a} l'hypoth\`{e}se du
  continu ($2^{\aleph_0} = \aleph_1$).
\newblock {\em Bull. Int. Acad. Polon. Sci. Lett. Cl. Sci. Math. Nat., S\'{e}r.
  A: Sci. Math.}, pages 1--3, 1919.

\bibitem{Vapnik1998}
Vladimir~N. Vapnik.
\newblock {\em Statistical Learning Theory}.
\newblock Wiley-Interscience, 1998.

\bibitem{vapnik1999overview}
Vladimir~N Vapnik.
\newblock An overview of statistical learning theory.
\newblock {\em IEEE transactions on neural networks}, 10(5):988--999, 1999.

\bibitem{vapnik2015uniform}
Vladimir~N. Vapnik and A~Ya Chervonenkis.
\newblock On the uniform convergence of relative frequencies of events to their
  probabilities.
\newblock In {\em Measures of complexity}, pages 11--30. Springer, 2015.

\end{thebibliography}

\section{Proof of Theorem~\ref{thm:compr_imply_EMX}}

\begin{proof} 

Given an input sample $S=(x_1, \dots x_m)$, and $A \in {[m]^{\leq k} := \cup_{i\leq k}[m]^i}$, 
let ${S_A}$ denote the subsample indexed by $A$: 
${S_A =(x_{A(i)}: i \leq \lvert A\rvert)}$ and, and let 
$S_{\bar A}$ denote the complementing subsample: 
$S_{\bar A} = \bigl(x_j : j \notin \{A(i) : i\leq \lvert A\rvert\}\bigr)$. 
Consider the random functions $\eta[S_A]$ for $A \in {[m]^{\leq k}}$. 
Evaluate each of these random functions on the remainder of the sample $S_{\bar A}$, and output the one with largest empirical mean. Formally, for a function $f$ and a sequence $T = (x_1,\ldots,x_k)$,
the empirical mean of $f$ with respect to $T$ is $\ex_T(f)= \frac{1}{k}\sum_{i \in [k]} f(x_i)$. 
On any input sample $S$, the learning algorithm outputs any {$h$} so that
$${h} \in \arg\max \big\{ \ex_{S_{\bar A}}(\eta[S_A]) : A \subseteq [m] , |A| \leq g(|S|) \big\}.$$

\begin{lemma}
For every $\delta \in (0,1)$ and a sample size $m$, 

\[\Pr_{S \sim P^m} \left[ \exists A \subseteq {[m]^{\leq k}}, |A| \leq {k} : \left| \ex_{S_{\bar A}}(\eta[S_A])  - \ex_P(\eta[S_A] )\right| > \alpha  \right] \leq \delta\]
with
\[\alpha = \sqrt{\frac{{k \ln (2m)} + \ln(1/\delta)}{2(m -{k})}}.\]
\end{lemma}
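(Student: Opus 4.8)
The plan is to prove this concentration bound by a union bound over all the possible index sets $A$, using a standard tail inequality for the deviation between an empirical mean computed on a held-out subsample and the true mean. First I would fix an index set $A \subseteq [m]$ with $|A| = j \leq k$. Conditioned on the realization of $S_A$, the function $f = \eta[S_A] \in \Fcal$ is determined and \emph{fixed}, while the remaining coordinates $S_{\bar A}$ are still i.i.d.\ from $P$ (independence of the coordinates of $S \sim P^m$). Hence $\ex_{S_{\bar A}}(f)$ is an average of $m - j \geq m - k$ i.i.d.\ Bernoulli random variables, each with mean $\ex_P(f)$, so Hoeffding's inequality gives
\[
\Pr_{S \sim P^m}\Bigl[\bigl|\ex_{S_{\bar A}}(\eta[S_A]) - \ex_P(\eta[S_A])\bigr| > \alpha \Bigm| S_A\Bigr] \leq 2\exp\bigl(-2(m-k)\alpha^2\bigr),
\]
and since the bound does not depend on the value of $S_A$, the same bound holds unconditionally.

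Next I would count the index sets and take a union bound. The number of sets $A \subseteq [m]$ with $|A| \leq k$ is at most $\sum_{j=0}^{k} m^j \leq (2m)^k$ (for $m \geq 1$; one can also use $\sum_{j \le k}\binom{m}{j} \le m^k + 1 \le (2m)^k$), so
\[
\Pr_{S \sim P^m}\Bigl[\exists A,\ |A| \leq k : \bigl|\ex_{S_{\bar A}}(\eta[S_A]) - \ex_P(\eta[S_A])\bigr| > \alpha\Bigr] \leq (2m)^k \cdot 2\exp\bigl(-2(m-k)\alpha^2\bigr).
\]
It then remains to check that plugging in $\alpha = \sqrt{\bigl(k\ln(2m) + \ln(1/\delta)\bigr)/\bigl(2(m-k)\bigr)}$ makes the right-hand side at most $\delta$: indeed $2(m-k)\alpha^2 = k\ln(2m) + \ln(1/\delta)$, so $(2m)^k\exp\bigl(-2(m-k)\alpha^2\bigr) = (2m)^k \cdot (2m)^{-k}\cdot\delta = \delta$, and I would absorb the spare factor of $2$ either by noting it is dominated for the regime of interest or by a harmless adjustment of constants (the statement as written uses $\delta$, so I would either track the factor $2$ as in the displayed $\alpha$ or slightly inflate $\alpha$; this is a cosmetic matter).

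I expect no genuine obstacle here: the only point requiring a moment's care is the conditioning step — making explicit that once $A$ is fixed, $\eta[S_A]$ is a function of $S_A$ alone and is therefore independent of $S_{\bar A}$, which is what licenses treating $\ex_{S_{\bar A}}(\eta[S_A])$ as a fresh empirical average of i.i.d.\ samples against a fixed target function. The remaining steps (Hoeffding, counting subsets, arithmetic to solve for $\alpha$) are routine. A minor bookkeeping subtlety is that $A$ ranges over index \emph{tuples} rather than sets in the algorithm's definition, but since $\eta$ is applied to the resulting subsample and repeated or reordered indices only weaken the adversary, restricting to the at most $(2m)^k$ subsets suffices for the union bound.
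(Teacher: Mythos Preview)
Your proposal is correct and follows essentially the same approach as the paper's own proof: fix $A$, apply Hoeffding conditionally on $S_A$ using the independence of $S_{\bar A}$, then take a union bound over the at most $\sum_{i\leq k} m^i \leq 2m^k$ (the paper's count; you use $(2m)^k$, which is just a looser constant) choices of $A$ and solve for $\alpha$. The paper is equally loose about the stray factor of $2$ from the two-sided Hoeffding bound, so your remark that this is cosmetic is on point.
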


\begin{proof}
By the measurability of $\eta$ with respect to $P$ and the i.i.d. generation of $S$, for every $S_A$, the expected value $\ex_P(\eta[S_A]) $ can be approximated by the subsample $S_{\bar A}$, and Hoeffding inequality can be applied to get for every $ \delta \in (0,1)$:
\[\Pr_{S \sim P^m} \left[ \left| \ex_{S_{\bar A}}(\eta[S_A])  - \ex_P(\eta[S_A]) \right| > \sqrt{\frac{\ln(1/\delta)}{2(m -{k})}}\right] \leq \delta.\]
The lemma follows by noting that given a sample $S$ of size $m$, there are $\sum_{i=0}^{{k}} {m}^{i} \leq {2m^k}$ sequences $A$ and applying the union bound over them.
\qed \end{proof}

Applying the assumption that $\eta$ is a monotone compression for $\Fcal$, we get
that for every $h \in \Fcal$ and every $S$ there exists $\eta[S_A] \in \Fcal$ for which $\ex_P(\eta[S_A]) \geq \ex_P[h]$. 
Finally, fix some $h^{\star}$ in $\arg\max \{\ex_P(f): f \in \Fcal \}$. {By constraining $\alpha \leq \eps/2$ 
and setting the failure probability to $\delta/2$,} 
we can assume (with probability $1-\delta$) that the sample $S$ 
is also an $\epsilon/2$-accurate estimator of $\ex_P(h^{\star})$ 
(namely, that $|\ex_S(h^{\star}) - \ex_P(h^{\star})| \leq \epsilon/2$) . 

The $(\epsilon, \delta)$-success of the learning algorithm now follows once $m$ is large enough to render $\sqrt{\frac{{k \ln (2m)} + \ln(\new{2}/\delta)}{2(m -{k})}} \leq \epsilon/2$.
\qed \end{proof}

\end{document}